\documentclass[10pt,twocolumn,letterpaper]{article}

\usepackage[pagenumbers]{cvpr} 

\usepackage{algorithm}
\usepackage{amsthm}
\usepackage{multirow}

\newtheorem{lemma}{Lemma}
\newtheorem{proposition}{Proposition}
\newtheorem{definition}{Definition}

\definecolor{cvprblue}{rgb}{0.21,0.49,0.74}
\usepackage[pagebackref,breaklinks,colorlinks,allcolors=cvprblue]{hyperref}

\def\paperID{14682} 
\def\confName{CVPR}
\def\confYear{2026}

\title{H-Sets: Hessian-Guided Discovery of \\ Set-Level Feature Interactions in Image Classifiers}

\author{Ayushi Mehrotra \thanks{Equal contribution} \\ 
California Institute of Technology \\ 
{\tt\small amehrotra@caltech.edu}\\
\and
Dipkamal Bhusal \footnotemark[1]\\
Rochester Institute of Technology\\
{\tt\small db1702@rit.edu}
\and 
Michael Clifford\\
Toyota InfoTech Labs \\  
{\tt\small michael.clifford@toyota.com}
\and 
Nidhi Rastogi\\
Rochester Institute of Technology\\
{\tt\small nxrvse@rit.edu}
}

\begin{document}
\maketitle
\begin{abstract}
Feature attribution methods explain the predictions of deep neural networks by assigning importance scores to individual input features. However, most existing methods focus solely on marginal effects, overlooking \textit{feature interactions}, where groups of features jointly influence model output. Such interactions are especially important in image classification tasks, where semantic meaning often arises from pixel interdependencies rather than isolated features. Existing interaction-based methods for images are either coarse (e.g., superpixel-only) or, fail to satisfy core interpretability axioms. In this work, we introduce \textbf{H-Sets}, a novel two-stage framework for discovering and attributing higher-order feature interactions in image classifiers. First, we detect locally interacting pairs via input Hessians and recursively merge them into semantically coherent sets; segmentation from Segment Anything (SAM) is used as a spatial grouping prior but can be replaced by other segmentations. Second, we \emph{attribute} each set with \textbf{IDG-Vis}, a set-level extension of Integrated Directional Gradients that integrates directional gradients along pixel-space paths and aggregates them with Harsanyi dividends. While Hessians introduce additional compute at the detection stage, this targeted cost consistently yields saliency maps that are sparser and more faithful. Evaluations across VGG, ResNet, DenseNet and MobileNet models on ImageNet and CUB datasets show that H-Sets generate more interpretable and faithful saliency maps compared to existing methods. Our code is available at \url{https://github.com/ayushimehrotra/H-Sets}.
\end{abstract}
    
\section{Introduction}
\label{sec:intro}
Understanding how deep neural networks arrive at their predictions is essential for building trust, ensuring accountability, and debugging failure modes, especially in high-stakes domains like healthcare and autonomous driving. Feature attribution methods~\cite{vanilla, ig, smilkov2017smoothgrad, gig, gradcam, ribeiro2016should, shrikumar2017learning, petsiuk2018rise, fong2017interpretable, fong2019understanding, bhusal2023sok} have emerged as a key tool for model explainability, assigning importance scores to individual input features to explain a model's decision for a specific input. However, most of these methods focus solely on marginal effects, the contribution of each feature in isolation, while overlooking feature interactions, where groups of features jointly influence the model’s output in ways not captured by their individual effects.

Feature interactions are especially prevalent in image classification, where the semantics of an object emerge from pixel interdependencies. While a few prior works have attempted to capture such interactions~\cite{arch, caso, ghorbani2019towards, taylor, faith, moxi}, these approaches are either computationally expensive, restricted to coarse superpixels or, segments, intractable for high-dimensional images, or fail to satisfy key interpretability axioms~\cite{ig}.

\begin{figure}
    \centering
    \includegraphics[width=0.95\linewidth]{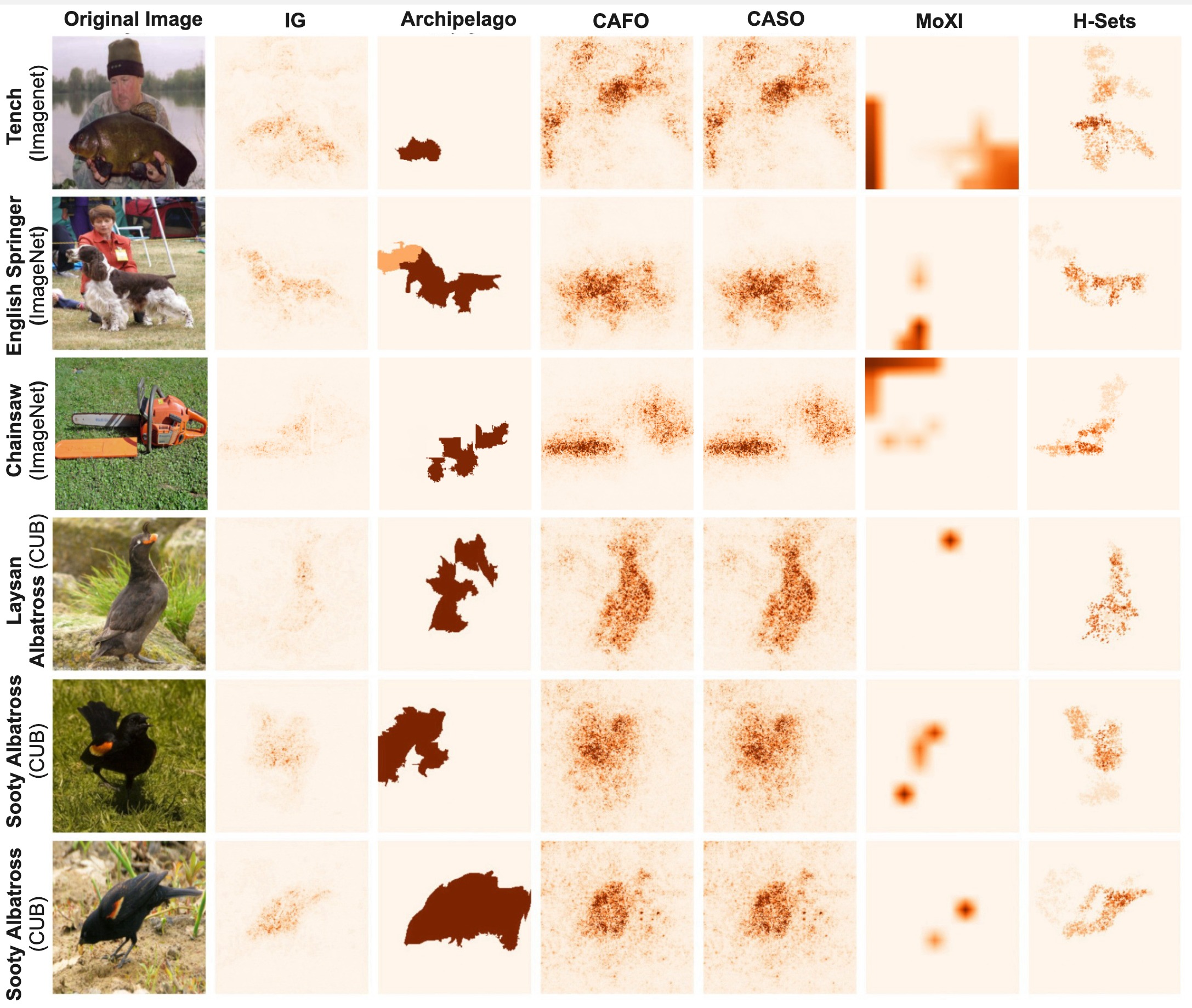}
    \caption{Comparison between Integrated Gradients (IG) \cite{ig}, Archipelago \cite{arch}, Context-aware First Order explanations (CAFO) \cite{caso}, Context-aware Second Order explanations (CASO) \cite{caso}, MOXI~\cite{moxi} and our proposed method, \textbf{H-Sets} on ImageNet and CUB samples with DenseNet121~\cite{resnet}. }
    \label{fig:qual-analy}
\end{figure}

In this work, we propose \textbf{H-Sets}, a principled framework for discovering and attributing higher-order feature interactions in image classifiers. \textbf{H-Sets} operates in two stages. \emph{(1) Interaction Detection:} We detect pairs of features with strong second-order dependencies using the Hessian matrix, then recursively merge them into larger groups while enforcing spatial coherence via segmentation masks from the Segment Anything Model (SAM)~\cite{sam}. Importantly, SAM is used only as a spatial grouping prior and can be replaced by other domain-specific segmentation technique (e.g., QuickShift\cite{zhang2020semantic} or, MedSAM\cite{ma2024segment}). While computing Hessians incurs additional cost than first-order gradients, we restrict their use only to this detection step, which significantly improves the semantic quality of discovered regions and produces far more interpretable saliency maps. \emph{(2) Interaction Attribution:} Once interaction sets are discovered, we assign them scalar importance values using \textbf{IDG-Vis} (Integrated Directional Gradients for Vision), our set-level extension of IDG~\cite{idg}. Unlike IDG, which attributes individual tokens in text, \textbf{IDG-Vis} attributes entire feature sets by integrating directional gradients along paths and combining them with Harsanyi dividends from cooperative game theory.

These design choices give \textbf{H-Sets} several key advantages. The use of Hessians enables it to isolate non-additive feature groups that purely gradient-based methods miss; the use of spatial priors produces coherent regions; and the game-theoretic attribution formulation lets us satisfy a comprehensive set of interpretability axioms. Together, these enable \textbf{H-Sets} to produce saliency maps that are sparser, more faithful, and more comprehensible than existing methods. As illustrated in \cref{fig:qual-analy}, \textbf{H-Sets} consistently identifies concise, and semantically meaningful regions that better reflect the model’s decision process. Quantitative evaluations in ~\cref{section:results} corroborate this observation, showing that these saliency maps achieve higher faithfulness.

Our main contributions are as follows:
\begin{enumerate}
    \item We propose \textbf{H-Sets}, a method that discovers semantically meaningful feature interaction sets by using the Hessian matrix to detect pairwise interactions and recursively expand them into higher-order groups.

\item We introduce \textbf{IDG-Vis}, an adaptation of Integrated Directional Gradients (IDG)~\cite{idg} for vision models, which attributes scalar importance scores to interaction sets using directional gradients and Harsanyi dividends. This approach satisfies key axioms of interpretability from both game theory and attribution literature.

\item  We demonstrate the effectiveness of \textbf{H-Sets} on VGG~\cite{vgg}, ResNet~\cite{resnet}, DenseNet~\cite{densenet} and MobileNet~\cite{mobilenet} models for ImageNet \cite{deng2009imagenet} and CUB~\cite{wah2011caltech} validation sets, achieving superior performance on faithfulness and sparsity metrics compared to existing interaction-based attribution baselines.
\end{enumerate}

\section{Related Work}\label{sec:relatedwork}

Many feature attribution methods explain model predictions by assigning importance scores to input features either via gradients or perturbations~\cite{vanilla, ig, smilkov2017smoothgrad, gig, gradcam, ribeiro2016should,shrikumar2017learning, petsiuk2018rise,fong2017interpretable,fong2019understanding}. However, these methods ignore feature interactions, and capture \emph{marginal} effects of the individual features.

Recent approaches have aimed to capture joint efffects between features. Game-theoretic approaches such as Faith-Shap~\cite{faith} explicitly model interactions building on the Shapley-Taylor interaction index~\cite{taylor} but are impractical at image scale due to subset enumeration. Integrated Hessians detects/attributes pairwise interactions along paths but remains costly~\cite{ih}. Archipelago detects islands over superpixels and attributes group effects~\cite{arch}, while Contextual Decomposition and context-aware variants optimize group-wise perturbations~\cite{cd,caso} without explicitly isolating non-additivity. Recent work \emph{MoXI}~\cite{moxi} proposes efficient self-context Shapley-interaction variants using patch insertion/deletion, reducing complexity from exponential to quadratic. However, its patch-level operation loses granularity over pixel interdependencies. Beyond classification, VX-CODE extends collective contributions to object detectors via patch-wise Shapley interactions~\cite{yamauchi2024explaining}. Outside vision, sparse Möbius-transform recovery~\cite{kang2024learning} and PredDiff-based interaction measures~\cite{blucher2022preddiff} offer principled set scores under low-degree or probabilistic assumptions. Concept based explanation methods explain model predictions in terms of high-level concepts, capturing crops or patches of input images~\cite{ghorbani2019towards, fel2023craft, bhusal2025face}.

Compared to these works, \textbf{H-Sets} detects \emph{local} statistical interactions with input Hessians, then attributes \emph{sets} via IDG-Vis—our set-level extension of IDG~\cite{idg} that integrates directional gradients and aggregates with Harsanyi dividends. Unlike Archipelago~\cite{arch}, detection is curvature-driven (not mask search); unlike Integrated Hessians~\cite{ih}, Hessians are used only for \emph{detection}, while attribution is gradient-path based; unlike MoXI~\cite{moxi}, we operate at pixel resolution (guided by a replaceable spatial prior such as SAM~\cite{sam}). These design choices yield saliency maps that are sparse, and faithful, as demonstrated in \cref{section:results}.

\section{Methodology}

Consider a multi-class image classifier $f$ that maps $\mathbf{x}\in\mathbb{R}^d$ to class logits $\mathbf{z}(\mathbf{x})\in\mathbb{R}^C$ with $f_c(\mathbf{x})$ the logit for class $c$. A standard attribution method explaining $f_c(\mathbf{x})$ assigns an importance score to each feature $x_i$, yielding $\phi(\mathbf{x})\in\mathbb{R}^d$. Our goal is to move beyond marginal (per-feature) effects and explain predictions through \emph{interaction sets}: groups of features whose joint influence on $f_c(\mathbf{x})$ is non-additive and semantically coherent. We denote an interaction set by $\mathcal{I}$ and the collection of such sets by $\mathcal{S}$. 

\subsection{Interaction Set Detection}
Let $\mathcal{I} = \{x_1, ..., x_j\}$ be a subset of input features with size $\mathcal{|I|}$. We denote $\mathbf{x}_\mathcal{I}$ as a vector that includes $x_i$ if $x_i \in \mathcal{I}$ with $0$ in all other indices. The first step to incorporating feature interaction into saliency maps is detecting \textit{semantically meaningful} feature interaction sets. We adopt the notion of statistical non-additivity~\cite{arch}: 
\begin{definition}[Statistical non-additive interaction]\label{def:nonadd}
A function $f$ contains a non-additive interaction over $\mathcal{I}$ if it cannot be decomposed as a sum of $|\mathcal{I}|$ subfunctions each excluding one variable:
$
f(\mathbf{x}) \neq \sum_{i\in\mathcal{I}} f_i(\mathbf{x}_{/\{i\}}).
$
\end{definition}

This definition formalizes the notion that features in $\mathcal{I}$ collectively affect the prediction in a way that individual contributions cannot explain. We refer to effects from single features as \textit{main effects}, and those arising from combinations as \textit{interaction effects}.

Our approach explores higher-order interactions, where $|\mathcal{I}|>2$, as they reveal the compounded effect of multiple features acting together. Notably, if a higher-order interaction exists, then \textit{all of its subsets are also interactions} \cite{higherorder}. We therefore first detect strong pairwise interaction and then expand them into larger sets.

\paragraph{Pairwise interactions via the Hessian.}
Gradient-based methods using mixed partial derivatives are a common approach to detect feature interactions in predictive models~\cite{friedman2008predictive}. If a function $f$ exhibits \textit{statistical interaction} between features ${x}_i \in \mathcal{I}$, then $\mathbb{E}_x\left[\frac{\partial^{|\mathcal{I}|} f(x)}{\partial x_{i_1} \partial x_{i_2}...\partial x_{i_{|\mathcal{I}|}}}\right] > 0$. However, the computational cost of evaluating higher-order derivatives scales exponentially with $|\mathcal{I}|$, rendering exhaustive search over large feature subsets impractical. We therefore focus on detecting pairwise interactions using the Hessian matrix, which captures second-order dependencies in the model and offers a practical solution for interaction detection.

\begin{definition}\label{definition:pairwise}
    \textnormal{(Pairwise Interaction)} For a neural network $f$, two features $x_i$ and $x_j$ exhibit feature attribution if the following holds true: $\mathbf{H}_{f_c} = \frac{\partial^2 f_c(\mathbf{x})}{\partial x_i \partial x_j}$ $>$ $\mu$, where $\mu$ is a hyperparameter.
\end{definition}

Each element of the Hessian $\mathbf{H}_{f_c}$ quantifies the second-order dependency between features, measuring how the joint perturbation of $x_i$ and $x_j$ influences the output $f_c(\mathbf{x})$. This is particularly useful in highly non-linear models like neural networks, where first-order derivatives may fail to capture inter-feature dependencies. The Hessian also naturally captures local model curvature, making it a faithful method for detecting feature interactions \cite{singla2019understanding}.

While effective for identifying interactions, the Hessian is not suitable for assigning importance scores to them. This limitation stems from gradient saturation, regions of the input space where the model becomes locally flat and unresponsive to input perturbations, leading to vanishing higher-order derivatives. As a result, the Hessian may underestimate the strength of interactions in these regions. To mitigate this, we restrict the Hessian’s role to interaction detection.

\paragraph{Smoothing ReLU Activation.}Prior work, such as Integrated Hessians~\cite{ih}, replaces ReLU with SoftPlus to enable higher-order differentiation. However, we found that SoftPlus introduces noisy interaction patterns. Instead, we adopt a smoother approximation proposed by Zhang \etal~\cite{fire}, which preserves ReLU-like behavior while enabling stable Hessian computation. Specifically, we replace $r(z) = \max(0, z)$ with the following differentiable function:

\begin{equation}
    h(z) =
    \begin{cases}
    \left( z + \sqrt{z^2 + \tau} \right)' = 1 + \frac{z}{\sqrt{z^2 + \tau}} & (z < 0) \\
    \left( \sqrt{z^2 + \tau} \right)' = \frac{z}{\sqrt{z^2 + \tau}} & (z \geq 0)
    \end{cases}
\end{equation}

where $\tau > 0$ is a smoothing hyperparameter. See Appendix~\ref{appendix:smoothingrelu} for plots of $h(z)$.

\subsubsection{Higher-Order Feature Interactions}\label{section:higherorderfeature}
To identify higher-order interaction sets  $|\mathcal{I}| > 2$, we build on pairwise Hessian-based interactions by recursively expanding them into larger sets. This is motivated by the observation that if a higher-order interaction exists, then all of its subsets must also exhibit interactions~\cite{higherorder}. We use a breadth-first strategy subject to semantic and size constraints to merge strongly interacting pairs into higher-order feature sets.

\textbf{Hyperparameter:} Due to the high dimensionality in images, we introduce two hyperparameters to control the size and relevance of interaction sets. First, we introduce a threshold $\mu$ on the pairwise feature interactions, which determines which pairwise Hessian entries qualify as interactions. This ensures that only strongly coupled features are merged into sets, reducing spurious connections. This hyperparameter is derived from Definition~\ref{definition:pairwise}. We also introduce a threshold $\nu$ which limits the number of features in any interaction set. This constraint is inspired by the concept of \textit{order of explanation} from Shapley-Taylor indices~\cite{taylor}, which bounds the number of features considered in interaction-based explanations. We provide ablation study on the impact of these hyperparaemters in \cref{ablation:hyperparameter}.

\textbf{Starting Feature:} To initialize the search, we require seed feature $x_i$ from the input. While any input feature could be used as a starting point, a better starting feature should be both important to the model's prediction and spatially coherent with interpretable regions of the image. To achieve this, we propose to fix $x_i$ as the highest attributed feature by Integrated Gradients (IG)~\cite{ig} in the segmentation masks produced by Segment Anything Model (SAM) \cite{sam}. We note that, while by default we choose the top-IG pixel within a candidate region to encourage discovery of a starting point that is important to the model's prediction, this seed choice is modular. In \cref{sec:ablation-seeds} we ablate random seeds and report effects on quality of saliency maps. 

Similarly, SAM is used here as a replaceable spatial prior to encourage spatial coherence. SAM \emph{does not} influence gradients or attribution values; it only restricts merges to occur preferentially within the same region. Crucially, SAM is replaceable: in \cref{sec:ablations-sam} we compare SAM with QuickShift~\cite{zhang2020semantic} (and a \emph{no segmentation} variant) and quantify the impact on the quality of saliency maps. In addition, while H-Sets utilizes SAM as a spatial prior, it still remains sensitive to non-semantic model shortcuts. We provide a dedicated analysis on the DecoyMNIST~\cite{erion2021improving} in Appendix~\ref{appendix:decoy_analysis}.

\textbf{Procedure.} Once the seed feature $x_i$ is chosen, we compute the second-order gradients $\mathbf{H}_{f_c} = \frac{\partial^2 f_c(\mathbf{x})}{\partial x_i \partial x_j}$ for all $x_j \in \mathbf{x}$, identifying features that exhibit strong pairwise interactions with $x_i$ (i.e., Hessian values exceeding threshold $\mu$). These features form an ordered candidate set $X = \{x_j : \mathbf{H}_{f_c}[i,j] > \mu\}$, which we iteratively append to the interaction set $\mathcal{I}$, starting with $\mathcal{I} = {x_i}$, until $|\mathcal{I}| = \nu$ or no further valid additions exist. If $\mathcal{I}$ is incomplete, we repeat the procedure using the most strongly interacting feature in $X$ as a new seed. This recursive construction yields a final interaction set $\mathcal{I}$ grounded in both model relevance and visual coherence. We then iteratively repeat this procedure for masks in SAM to obtain the final set $\mathcal{S}$. We set the number of interaction sets as $|\mathcal{S}|=5$, similar to prior works~\cite{arch}. We provide ablation study on the interaction set count in \cref{ablation:interactionSet}. The algorithmic steps are provided in \cref{alg:hessian}. Also, see Appendix~\ref{appendix:higherorder} for a diagrammatic overview.

\subsection{Interaction Set Attribution: IDG-Vis}
With a set of meaningful feature interaction sets $\mathcal{S}$ from the input image $\mathbf{x}$, we discuss a game-theoretic approach to attribute each set. First, we instantiate the standard TU-game.

\textbf{Cooperative Game Theory.} Given an image classifier $f$ and an input image $\mathbf{x}$, we define a positive transferable utility (TU)-game as $(N,v)$, where $N$ players correspond to the number of features in the input image $\mathbf{x}$, and $v \colon 2^N \longrightarrow \mathbb{R}$ is the characteristic set function, attributing the payoff to a set of players forming a coalition independently.

While the characteristic function $v$ quantifies the total contribution of any subset of features, it does not distinguish whether this contribution is due to individual effects or interactions between features. Therefore, to explicitly measure the unique contribution of interactions, we turn to the Harsanyi dividend~\cite{harsanyi1982simplified}, which decomposes 
$v$ into additive components corresponding to individual and joint effects.

\begin{definition}{\textnormal{(Harsanyi Dividend)}} 
 Given characteristic function $v \colon 2^N \!\to\! \mathbb{R}$, the Harsanyi dividend for set $T \subseteq N$ is
 \begin{align}
     d(v,T) = \sum_{S \subseteq T} (-1)^{|T|-|S|} v(S)  
 \end{align}
\end{definition}

The Harsanyi dividend $d(v, T)$ captures the unique contribution of coalition $T$ to the total value of the grand coalition. This makes it a natural choice for attributing interaction effects, where we wish to quantify the joint influence of a group of features, excluding their individual roles. 

We instantiate the TU-game’s characteristic function $v$ with our set-level attribution $A$, where $A(\mathcal{I})$ is an attribution score assigned to interaction set $\mathcal{I}$. This enables dividend computation to compute interaction-specific attributions that satisfy desired theoretical properties. 

\paragraph{IDG-Vis.} We now describe how to attribute a scalar importance score to each feature interaction set $\mathcal{I} \in \mathcal{S}$. We introduce IDG-Vis (Integrated Directional Gradients for Vision), an adaptation of the original IDG framework~\cite{idg}, which was developed for text models. IDG-Vis extends this approach to the image domain by constructing directional vectors over pixels for set-based interaction attribution. 

\textbf{Motivation:} Given an interaction set $\mathcal{I} \subseteq \{1, \dots, d\}$, our goal is to assign a scalar value $a(\mathcal{I})$ that reflects the collective contribution of features in $\mathcal{I}$ to the model prediction $f(\mathbf{x})$. The key idea is to measure the change in the model's output in the direction of the interaction set. This directional signal provides a unified importance score that captures the joint influence of features in $\mathcal{I}$.

Let $\mathbf{a} \in \mathbb{R}^d$ be a direction vector constructed as:

\begin{minipage}[t]{0.48\textwidth}
\begin{equation}
a_i =
\begin{cases}
x_i - x'_i & \text{if } x_i \in \mathcal{I} \\
0 & \text{otherwise}
\end{cases}
\label{vec}
\end{equation}
\end{minipage}
\hfill
\begin{minipage}[t]{0.48\textwidth}
\begin{equation}
\hat{\mathbf{a}} = \frac{\mathbf{a}}{\|\mathbf{a}\|}
\label{dir}
\end{equation}
\end{minipage}

The directional gradient is then defined as the directional derivative along $\hat{\mathbf{a}}$, given by $\nabla_\mathcal{I} f(\mathbf{x}) = \left| \nabla f(\mathbf{x}) \cdot \hat{\mathbf{a}} \right| \label{grad}$. We take the absolute value to ensure compatibility with the positive TU-game framework discussed earlier. 

To reduce sensitivity to local gradient noise and saturation effects, we integrate the directional gradient along a linear path from a baseline $\mathbf{x}'$ to the input $\mathbf{x}$. 
\begin{equation}
\text{IDG-Vis}(T) = \int_{\alpha=0}^{1} \nabla_T f(\mathbf{x}' + \alpha(\mathbf{x} - \mathbf{x}')) \, d\alpha \label{idg}
\end{equation}
We define the set-level attribution score via Monte Carlo estimation of the Harsanyi dividend:
\begin{equation}
a(\mathcal{I}) \approx \sum_{T \sim \mathcal{U}(\mathcal{P}(\mathcal{I}))} \text{IDG-Vis}(T) \label{val}
\end{equation}
where $\mathcal{P}(\mathcal{I})$ is the power set of $\mathcal{I}$ and $\mathcal{U}(\cdot)$ denotes uniform sampling. This captures the interaction effect of $\mathcal{I}$ as the expected directional contribution over its subsets. To create a saliency map, we aggregate the attributions for each feature interaction set into one tensor. 

\textbf{Efficient Variant:} Since computing the directional gradient for all subsets of $\mathcal{I}$ is infeasible in high-dimensional spaces, we approximate the expected value in Equation~\eqref{val} using $t$ Monte Carlo samples. Additionally, we discretize the path integral in Equation~\eqref{idg} using a Riemann sum over $m$ steps:
\begin{gather}
    \text{IDG-Vis}(\mathcal{I}) = \frac{1}{m+1} \sum_{k=0}^m \nabla_\mathcal{I} f(\mathbf{x}' + \frac{k}{m}(\mathbf{x} - \mathbf{x}'))
\end{gather}

 We typically set $t, m \in [50, 100]$, following the recommendation of Integrated Gradients~\cite{ig}. We provide algorithmic steps for \textbf{IDG-Vis} in \cref{alg:dividends}.


\subsection{Axioms}\label{sec:axioms}
We align \textbf{IDG-Vis} with game-theoretic and attribution axioms to ensure predictable behavior of set attributions. Our attribution method satisfies:

\textbf{Axiom 1} (Non-Negativity) \textit{Every feature subset has a non-negative value $a(T) \geq 0.$}

\textbf{Axiom 2} (Normality) \textit{The value of the empty set of features is zero. $a(\emptyset) = 0$}

\textbf{Axiom 3} (Monotonicity) \textit{The value of a set of features is greater than or equal to the value of any of its subsets. If $R \subseteq T$, then $a(R) \leq a(T)$}

\textbf{Axiom 4} (Superadditivity) \textit{The value of the union of two disjoint sets of features is greater than or equal to the sum of the values of the two sets. If $R \cap T = \emptyset$, then $a(R \cup T) \geq a(R) + a(T).$}

These four axioms originate from classical game theory. Non-negativity ensures a positive TU-game; normality prevents attributing value to null interactions; monotonicity ensures larger coalitions receive at least as much value; and superadditivity implies that merging sets cannot reduce the total value. We now outline axioms prevalent in feature attribution methods \cite{ig}:

\textbf{Axiom 5} (Approximate Completeness)  \textit{The sum of $a(\mathcal{I})$ over $\mathcal{I} \in \mathcal{S}$ approximates the output difference between $f(\mathbf{x})$ and $f(\mathbf{x'})$. $\sum_{\mathcal{I} \in \mathcal{S}} a(\mathcal{I}) \leq f(\mathbf{x}) - f(\mathbf{x'})$ with equality if the interaction set $\mathcal{S}$ and its sampled subsets fully span the input difference vector $\mathbf{x} - \mathbf{x}'$.}

\textbf{Axiom 6} (Sensitivity) \textit{If an input $\mathbf{x}$ and a baseline $\mathbf{x'}$ are equal everywhere except $\mathbf{x}_\mathcal{I} \neq \mathbf{x'}_\mathcal{I}$ and if $f(\mathbf{x}) \neq f(\mathbf{x'})$, then $a(\mathcal{I}) \neq 0$} 

\textbf{Axiom 7} (Implementation Invariance) \textit{Two neural networks $f(\cdot)$ and $f'(\cdot)$, with corresponding value functions $a'$ and $a''$, are functionally equivalent if $f'(\mathbf{x}) = f''(\mathbf{x})$ for all $\mathbf{x}$. Then, $a'(\mathcal{I}) = a''(\mathcal{I})$ for sets $\mathcal{I} \in \mathcal{S}$}.

\textbf{Axiom 8} (Linearity) \textit{Given two neural networks $f'(\cdot)$ and $f''(\cdot)$, and $f(\mathbf{x}) = c \cdot f'(\mathbf{x}) + d \cdot f''(\mathbf{x})$, then $a(\mathcal{I}) = c \cdot a'(\mathcal{I}) + d \cdot a''(\mathcal{I})$}

\textbf{Axiom 9} (Symmetry-Preserving) \textit{Let $\mathcal{I}_1$ and $\mathcal{I}_2$ be symmetric interaction sets with equal cardinality. If swapping features in $\mathcal{I}_1$ and $\mathcal{I}_2$ does not change the function $f$ then $a(\mathcal{I}_1) = a(\mathcal{I}_2)$.} 

These standard attribution axioms make results comparable across inputs and models. \emph{Approximate completeness} ties the total set value to the logit difference $f_c(\mathbf{x})-f_c(\mathbf{x}')$; \emph{sensitivity} enforces responsiveness—if only features in $\mathcal{I}$ change and the prediction changes, then $a(\mathcal{I})\!\neq\!0$; \emph{implementation invariance} ensures reparameterizations of the same function yield identical attributions; \emph{linearity} preserves proportionality for mixtures; and \emph{symmetry} prevents arbitrary preference among exchangeable sets.

All of these essential axioms are satisfied by our approach. We discuss the proofs in \cref{appendix:proof}. 

\section{Experiments}\label{section:experiments}
\subsection{Setup}
\textbf{Dataset and Models:} We perform our evaluation on the ImageNet \cite{deng2009imagenet} and CUB~\cite{wah2011caltech} validation set using VGG16~\cite{vgg}, ResNet101~\cite{resnet}, DenseNet121~\cite{densenet}, and MobileNetV3~\cite{mobilenet}.
\newline 
\noindent 
\textbf{Baselines:} We compare our method against non-interaction-based attribution method, Integrated Gradients (IG) \cite{ig}, and existing interaction-based attribution methods: context-aware first-order (CAFO) second-order explanations (CASO) explanations~\cite{caso}, Archipelago (Arch) \cite{arch} and MOXI~\cite{moxi}. We discuss these methods in detail in Appendix~\ref{appendix:baseline}.  We evaluate with more baselines in Appendix~\ref{appendix:morebaseline}.
\newline 
\noindent 
\textbf{Metrics:} We evaluate quality of explanations using sparsity and faithfulness. Sparsity, measured with Gini index~\cite{chalasani2020concise}, evaluates the concentration of importance scores to assess the comprehensibility of explanations. Faithfulness, measured with {ROAD}~\cite{rong2022consistent}, quantifies how true the explanations are to the underlying model. ROAD measures how model accuracy degrades when the most important features are replaced using a noisy linear imputation technique. We opted for ROAD over Insertion/Deletion \cite{petsiukrise} and ROAR \cite{hooker2019benchmark} because Insertion/Deletion introduces artifacts, leading to distribution shifts in perturbed inputs, and ROAR requires costly model retraining. We report $\text{ROAD}_\text{AOPC}$, the area over the perturbation curve, where higher scores indicate more faithful attributions. All results are averaged over 1000 correctly classified samples. Full metric details are provided in Appendix~\ref{appendix:metrics}. We evaluate with additional faithfulness metric in Appendix~\ref{appendix:moremetrics}.

\noindent 
{\textbf{Hyperparameters:}} We set max elements in an interaction set ($\nu$) to 2000 features for ImageNet, and 3000 features for CUB. We set the threshold for Hessian ($\mu$) to 0.5 for both ImageNet and CUB. We discuss ablation for hyperparameters in \cref{ablation:hyperparameter}. We set the number of feature interaction sets $|\mathcal{S}|$ to 5, similar to Archipelago \cite{arch} (ablation in \cref{ablation:interactionSet} ). 

\subsection{Results}\label{section:results}
\paragraph{Qualitative analysis.} Figure~\ref{fig:qual-analy} compares saliency maps generated by existing attribution methods with H-Sets on representative samples from ImageNet and CUB. Integrated Gradients (IG)~\cite{ig} yields diffuse and noisy attributions, spreading importance across the input image. Archipelago produces coarse, mask-like regions that obscure finer details by treating entire segments uniformly. CAFO and CASO~\cite{caso} yield sharper maps but are still denser. MOXI~\cite{moxi} produces extremely compact maps, yet these activations are often too localized—focusing on single high-response regions while missing the broader spatial context necessary for interpretation. In contrast, \textbf{H-Sets} delineates compact and semantically coherent regions that correspond to discriminative object parts capturing the model’s decision basis with high spatial precision. More examples are provided in Appendix~\ref{appendix:qualitativeexamples}.

\paragraph{Sparsity.} Table~\ref{tab:sparsity} reports the Gini index scores to quantify the sparsity of explanations across different explanation methods and architectures on ImageNet and CUB datasets. A higher Gini score indicates that the explanation highlights fewer, more decisive features while suppressing noise from irrelevant regions. Across both ImageNet and CUB, \textbf{H-Sets} consistently achieves the highest sparsity, outperforming the baselines. {MOXI}~\cite{moxi} also attains strong sparsity, slightly exceeding \textbf{H-Sets} on select architectures.

\begin{table}[t]
\centering
\caption{
Sparsity comparison (Gini index; higher is better) of our method (H-Sets) versus  Integrated Gradients (IG)~\cite{ig}, context-aware first-order~(CAFO), and second-order explanations(CASO)~\cite{caso}, Archipelago (Arch)~\cite{arch} and MOXI~\cite{moxi}, using Gini-index. Values are mean $\pm$ std for 5 runs.}
\label{tab:sparsity}
\resizebox{0.45\textwidth}{!}{
\begin{tabular}{l l c c c c c c}
\toprule
\multicolumn{2}{c}{} &
\multicolumn{6}{c}{\textbf{Explanation Methods}} \\
\cmidrule(l){3-8}
\textbf{Dataset} & \textbf{Model} & IG & Arch & CAFO & CASO & MoXI & \textbf{H-Sets} \\
\midrule

\multirow{4}{*}{\textbf{ImageNet}} 
& VGG        & 0.71{\scriptsize$\pm$0.08} & 0.91{\scriptsize$\pm$0.03} & 0.83{\scriptsize$\pm$0.05} & 0.84{\scriptsize$\pm$0.04} & 0.85{\scriptsize$\pm$0.07} & \textbf{0.95{\scriptsize$\pm$0.01}} \\
& ResNet     & 0.81{\scriptsize$\pm$0.11} & 0.91{\scriptsize$\pm$0.03} & 0.84{\scriptsize$\pm$0.11} & 0.83{\scriptsize$\pm$0.11} & 0.87{\scriptsize$\pm$0.07} & \textbf{0.98{\scriptsize$\pm$0.01}} \\
& DenseNet   & 0.60{\scriptsize$\pm$0.05} & 0.90{\scriptsize$\pm$0.04} & 0.73{\scriptsize$\pm$0.05} & 0.74{\scriptsize$\pm$0.05} & 0.72{\scriptsize$\pm$0.14} & \textbf{0.94{\scriptsize$\pm$0.01}} \\
& MobileNet  & 0.63{\scriptsize$\pm$0.05} & 0.91{\scriptsize$\pm$0.05} & 0.79{\scriptsize$\pm$0.04} & 0.80{\scriptsize$\pm$0.04} & \textbf{0.96{\scriptsize$\pm$0.01}} & 0.95{\scriptsize$\pm$0.01} \\
\midrule

\multirow{4}{*}{\textbf{CUB}} 
& VGG        & 0.67{\scriptsize$\pm$0.08} & 0.92{\scriptsize$\pm$0.03} & 0.64{\scriptsize$\pm$0.13} & 0.49{\scriptsize$\pm$0.29} & 0.91{\scriptsize$\pm$0.08} & \textbf{0.93{\scriptsize$\pm$0.02}} \\
& ResNet     & 0.59{\scriptsize$\pm$0.07} & 0.89{\scriptsize$\pm$0.04} & 0.55{\scriptsize$\pm$0.08} & 0.56{\scriptsize$\pm$0.07} & 0.87{\scriptsize$\pm$0.13} & \textbf{0.92{\scriptsize$\pm$0.02}} \\
& DenseNet   & 0.57{\scriptsize$\pm$0.07} & 0.89{\scriptsize$\pm$0.05} & 0.52{\scriptsize$\pm$0.12} & 0.48{\scriptsize$\pm$0.18} & \textbf{0.93{\scriptsize$\pm$0.07}} & 0.91{\scriptsize$\pm$0.02} \\
& MobileNet  & 0.58{\scriptsize$\pm$0.06} & 0.89{\scriptsize$\pm$0.05} & 0.48{\scriptsize$\pm$0.04} & 0.52{\scriptsize$\pm$0.08} & 0.81{\scriptsize$\pm$0.13} & \textbf{0.89{\scriptsize$\pm$0.02}} \\
\bottomrule
\end{tabular}
}
\end{table}

\paragraph{Faithfulness.} Table~\ref{tab:faithfulnesseval} presents the faithfulness evaluation using $\text{ROAD}_\text{AOPC}$ scores, measuring how closely each explanation aligns with the model’s true decision behavior. Higher values indicate that removing the most attributed regions leads to a steeper drop in accuracy, reflecting more faithful explanations. Across both ImageNet and CUB, \textbf{H-Sets} consistently achieves the highest faithfulness scores on every architecture. Notably, {MOXI}~\cite{moxi} again performs competitively on certain ImageNet architectures. Associated ROAD plots are provided in Appendix~\ref{appendix:ROADplots}.

\begin{table}[t]
\centering
\caption{
Faithfulness comparison ($\mathrm{ROAD}_{\mathrm{AOPC}}$; higher is better) of our method (H-Sets) 
versus Integrated Gradients (IG)~\cite{ig}, context-aware first-order (CAFO), and second-order explanations(CASO)~\cite{caso}, Archipelago (Arch)~\cite{arch} and MOXI~\cite{moxi}, using $\text{ROAD}_\text{AOPC}$ score. Values are mean $\pm$ std for 5 runs.
}
\label{tab:faithfulnesseval}
\resizebox{0.48\textwidth}{!}{
\begin{tabular}{l l c c c c c c}
\toprule
\multicolumn{2}{c}{} &
\multicolumn{6}{c}{\textbf{Explanation Methods}} \\
\cmidrule(l){3-8}
\textbf{Dataset} & \textbf{Model} & IG & Arch & CAFO & CASO & MoXI & \textbf{H-Sets} \\
\midrule

\multirow{4}{*}{\textbf{ImageNet}}
& VGG        & 0.13{\scriptsize$\pm$0.01} & 0.27{\scriptsize$\pm$0.02} & 0.12{\scriptsize$\pm$0.01} & 0.12{\scriptsize$\pm$0.03} & 0.32{\scriptsize$\pm$0.03} & \textbf{0.34{\scriptsize$\pm$0.01}} \\
& ResNet     & 0.23{\scriptsize$\pm$0.01} & 0.21{\scriptsize$\pm$0.02} & 0.19{\scriptsize$\pm$0.00} & 0.19{\scriptsize$\pm$0.01} & 0.19{\scriptsize$\pm$0.01} & \textbf{0.26{\scriptsize$\pm$0.02}} \\
& DenseNet   & 0.30{\scriptsize$\pm$0.01} & 0.32{\scriptsize$\pm$0.01} & 0.25{\scriptsize$\pm$0.01} & 0.25{\scriptsize$\pm$0.01} & 0.35{\scriptsize$\pm$0.03} & \textbf{0.38{\scriptsize$\pm$0.01}} \\
& MobileNet  & 0.06{\scriptsize$\pm$0.01} & 0.24{\scriptsize$\pm$0.01} & 0.26{\scriptsize$\pm$0.02} & 0.26{\scriptsize$\pm$0.01} & 0.33{\scriptsize$\pm$0.03} & \textbf{0.37{\scriptsize$\pm$0.01}} \\
\midrule

\multirow{4}{*}{\textbf{CUB}}
& VGG        & 0.61{\scriptsize$\pm$0.01} & 0.06{\scriptsize$\pm$0.00} & 0.44{\scriptsize$\pm$0.01} & 0.44{\scriptsize$\pm$0.00} & 0.58{\scriptsize$\pm$0.01} & \textbf{0.65{\scriptsize$\pm$0.00}} \\
& ResNet     & 0.61{\scriptsize$\pm$0.01} & 0.55{\scriptsize$\pm$0.01} & 0.41{\scriptsize$\pm$0.01} & 0.41{\scriptsize$\pm$0.01} & 0.58{\scriptsize$\pm$0.01} & \textbf{0.63{\scriptsize$\pm$0.01}} \\
& DenseNet   & 0.60{\scriptsize$\pm$0.00} & 0.49{\scriptsize$\pm$0.00} & 0.51{\scriptsize$\pm$0.01} & 0.51{\scriptsize$\pm$0.01} & 0.55{\scriptsize$\pm$0.01} & \textbf{0.65{\scriptsize$\pm$0.00}} \\
& MobileNet  & 0.56{\scriptsize$\pm$0.01} & 0.20{\scriptsize$\pm$0.01} & 0.40{\scriptsize$\pm$0.01} & 0.40{\scriptsize$\pm$0.01} & 0.59{\scriptsize$\pm$0.01} & \textbf{0.60{\scriptsize$\pm$0.00}} \\
\bottomrule
\end{tabular}
}
\end{table}

\section{Ablation: Hyperparameter}\label{ablation:hyperparameter}
\paragraph{Number of features.} Table~\ref{tab:ablation-imagenet-side-by-side} (left) and Figure~\ref{fig:numfeatures-imagenet} illustrate how varying the number of features $\nu$ affects \textbf{H-Sets} explanation on ImageNet. When $\nu$ is small (e.g., 250), the resulting saliency maps are extremely sparse, capturing only the most dominant features. 
As $\nu$ increases, the maps gradually expand to cover richer visual structures, improving contextual completeness but also reducing sparsity. Interestingly, the $\text{ROAD}_{\text{AOPC}}$ score remains largely stable across this range, indicating that faithfulness is preserved even with fewer features, suggesting that a relatively small subset of high-interaction features suffices to explain the model’s predictions. 
However, larger $\nu$ increases computational cost since each added feature introduces additional Hessian evaluations and set-level directional gradients. 
We therefore adopt $\nu = 2000$ as a default, which provides visually coherent maps while maintaining efficiency.

\begin{table}[t]
\centering
\caption{
Ablation of H-Sets hyperparameters on ImageNet (MobileNet). \textbf{Left:} varying the number of features included in explanations ($\nu$). \textbf{Right:} varying the Hessian interaction threshold ($\mu$).
}
\label{tab:ablation-imagenet-side-by-side}
\vspace{4pt}

\begin{minipage}{0.45\linewidth}
\centering
\textbf{(a) Feature Count Ablation ($\nu$)}
\vspace{4pt}

\resizebox{\textwidth}{!}{
\begin{tabular}{l c c}
\toprule
\textbf{\# Features ($\nu$)} & \textbf{Sparsity} & $\mathbf{ROAD}_{\mathbf{AOPC}}$ \\
\midrule
250  & 0.99 & 0.42 \\
1000 & 0.97 & 0.40 \\
2000 & 0.94 & 0.37 \\
3000 & 0.90 & 0.39 \\
5000 & 0.85 & 0.37 \\
\bottomrule
\end{tabular}
}
\end{minipage}
\hspace{1.5em}
\begin{minipage}{0.45\linewidth}
\centering
\textbf{(b) Hessian Threshold Ablation ($\mu$)}
\vspace{4pt}

\resizebox{\textwidth}{!}{
\begin{tabular}{l c c}
\toprule
\textbf{Threshold ($\mu$)} & \textbf{Sparsity} & $\mathbf{ROAD}_{\mathbf{AOPC}}$ \\
\midrule
0.1 & 0.95 & 0.39 \\
0.2 & 0.95 & 0.37 \\
0.3 & 0.95 & 0.38 \\
0.4 & 0.96 & 0.38 \\
0.5 & 0.96 & 0.37 \\
0.6 & 0.96 & 0.41 \\
0.7 & 0.96 & 0.39 \\
0.8 & 0.96 & 0.39 \\
\bottomrule
\end{tabular}
}
\end{minipage}

\end{table}

\begin{figure}[ht]
    \centering
    \includegraphics[width=0.90\linewidth]{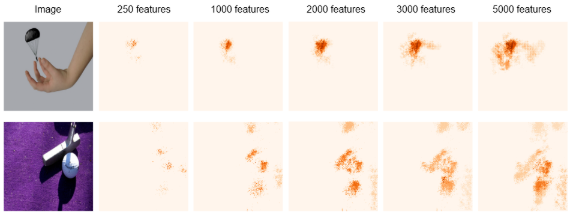}
    \caption{Saliency maps with different values of $\nu$ on ImageNet.}
    \label{fig:numfeatures-imagenet}
\end{figure}

\paragraph{Interaction threshold.} Table~\ref{tab:ablation-imagenet-side-by-side} (right) shows the effect of varying the Hessian threshold $\mu$ while fixing $\nu = 2000$. Both sparsity and faithfulness remain remarkably stable across a wide range of $\mu$, suggesting that \textbf{H-Sets} is robust to this hyperparameter and reliably detects semantically meaningful interactions without being overly sensitive to the exact strength of second-order gradients. Lower thresholds (\(\mu < 0.3\)) admit more feature pairs, resulting in slightly denser maps, whereas higher thresholds yield more selective interactions but only marginal changes in $\text{ROAD}_{\text{AOPC}}$.

Similar trends are observed in ablation results on the CUB dataset in Appendix~\ref{appendix:ablationHyperparameterCUB}. 
\section{Ablation: Spatial Prior}\label{sec:ablations-sam}

\cref{tab:ablate_segmentation_road} and~\cref{tab:ablate_segmentation_sparsity} analyze the influence of spatial priors used to initialize seed regions during interaction discovery. 
We compare three variants: \emph{SAM~\cite{sam}+IG~\cite{ig}}, \emph{}{Quickshift~\cite{zhang2020semantic}+IG~\cite{ig}}, and a baseline \emph{IG~\cite{ig} without segmentation}. 
\cref{tab:ablate_segmentation_road} shows that across both ImageNet and CUB, using SAM as a spatial prior yields the highest or comparable $\text{ROAD}_{\text{AOPC}}$ scores, demonstrating that accurate mask initialization improves attribution faithfulness. The gains are most pronounced on fine-grained CUB classes, where SAM’s high-quality, part-aware masks guide the interaction discovery process toward semantically consistent regions, avoiding spurious context such as background textures. 

Interestingly, sparsity trends reveal a complementary pattern in \cref{tab:ablate_segmentation_sparsity}. While the \emph{no-segmentation} variant occasionally achieves slightly higher Gini scores, its explanations are less faithful to the model’s reasoning, indicating that over-sparse maps without a spatial prior may discard relevant evidence. \emph{SAM}~\cite{sam} as a segmentation prior provides the best balance between faithfulness and sparsity.
\begin{table}[t]
\centering
\caption{
Ablation study of segmentation strategies on the faithfulness metric (higher is better). 
Values are mean $\pm$ std over 5 runs.
}
\label{tab:ablate_segmentation_road}
\resizebox{0.48\textwidth}{!}{
\begin{tabular}{l l c c c}
\toprule
\multicolumn{2}{c}{} &
\multicolumn{3}{c}{\textbf{Segmentation Methods}} \\
\cmidrule(l){3-5}
\textbf{Dataset} & \textbf{Model} & SAM+IG & Quickshift+IG & No SAM+IG \\
\midrule

\multirow{4}{*}{\textbf{ImageNet}}
& VGG         & 0.34{\scriptsize$\pm$0.01} & 0.34{\scriptsize$\pm$0.02} & 0.33{\scriptsize$\pm$0.02} \\
& ResNet   & 0.26{\scriptsize$\pm$0.02} & 0.21{\scriptsize$\pm$0.03} & 0.18{\scriptsize$\pm$0.01} \\
& DenseNet & 0.38{\scriptsize$\pm$0.01} & 0.38{\scriptsize$\pm$0.01} & 0.39{\scriptsize$\pm$0.03} \\
& MobileNet & 0.37{\scriptsize$\pm$0.01} & 0.38{\scriptsize$\pm$0.01} & 0.37{\scriptsize$\pm$0.02} \\
\midrule

\multirow{4}{*}{\textbf{CUB}}
& VGG         & 0.65{\scriptsize$\pm$0.00} & 0.62{\scriptsize$\pm$0.01} & 0.59{\scriptsize$\pm$0.02} \\
& ResNet   & 0.63{\scriptsize$\pm$0.01} & 0.59{\scriptsize$\pm$0.03} & 0.59{\scriptsize$\pm$0.02} \\
& DenseNet & 0.65{\scriptsize$\pm$0.00} & 0.58{\scriptsize$\pm$0.02} & 0.56{\scriptsize$\pm$0.03} \\
& MobileNet & 0.60{\scriptsize$\pm$0.00} & 0.62{\scriptsize$\pm$0.01} & 0.62{\scriptsize$\pm$0.01} \\
\bottomrule
\end{tabular}
}
\end{table}

\begin{table}[t]
\centering
\caption{
Ablation study of segmentation strategies on the Sparsity metric (higher is better). 
Values are mean $\pm$ std over 5 runs.
}
\label{tab:ablate_segmentation_sparsity}
\resizebox{0.48\textwidth}{!}{
\begin{tabular}{l l c c c}
\toprule
\multicolumn{2}{c}{} &
\multicolumn{3}{c}{\textbf{Segmentation Methods}} \\
\cmidrule(l){3-5}
\textbf{Dataset} & \textbf{Model} & SAM+IG & Quickshift+IG & No SAM+IG \\
\midrule

\multirow{4}{*}{\textbf{ImageNet}}
& VGG         & 0.95{\scriptsize$\pm$0.01} & 0.95{\scriptsize$\pm$0.01} & 0.97{\scriptsize$\pm$0.01} \\
& ResNet   & 0.98{\scriptsize$\pm$0.01} & 0.95{\scriptsize$\pm$0.01} & 0.96{\scriptsize$\pm$0.01} \\
& DenseNet & 0.94{\scriptsize$\pm$0.01} & 0.96{\scriptsize$\pm$0.01} & 0.96{\scriptsize$\pm$0.01} \\
& MobileNet & 0.95{\scriptsize$\pm$0.01} & 0.94{\scriptsize$\pm$0.01} & 0.94{\scriptsize$\pm$0.02} \\
\midrule

\multirow{4}{*}{\textbf{CUB}}
& VGG         & 0.93{\scriptsize$\pm$0.02} & 0.95{\scriptsize$\pm$0.01} & 0.96{\scriptsize$\pm$0.01} \\
& ResNet   & 0.92{\scriptsize$\pm$0.02} & 0.94{\scriptsize$\pm$0.02} & 0.96{\scriptsize$\pm$0.01} \\
& DenseNet & 0.91{\scriptsize$\pm$0.02} & 0.94{\scriptsize$\pm$0.02} & 0.96{\scriptsize$\pm$0.01} \\
& MobileNet & 0.89{\scriptsize$\pm$0.02} & 0.93{\scriptsize$\pm$0.02} & 0.94{\scriptsize$\pm$0.02} \\
\bottomrule
\end{tabular}
}
\end{table}

\subsection{Ablation: Seed selection}\label{sec:ablation-seeds}

\cref{tab:ablate_seed_road} and \cref{tab:ablate_seed_sparsity} analyze how the choice of seed initialization influences interaction discovery and resulting attributions. 
We compare three configurations: (\textbf{i}) random seeds with SAM segmentation, (\textbf{ii}) IG-based seeds within SAM masks (ours), and (\textbf{iii}) random seeds without segmentation. \cref{tab:ablate_seed_road} shows that across both ImageNet and CUB, initializing from the top-IG feature inside each SAM region yields consistently higher $\text{ROAD}_{\text{AOPC}}$ scores, confirming that seeding interaction search with features already deemed salient by the model leads to more causally aligned explanations. This initialization effectively focuses the Hessian-based search on high-relevance regions, avoiding exploration of arbitrary local dependencies or background noise.

\cref{tab:ablate_seed_sparsity}, sparsity remains stable across all configurations, showing that the improvement in faithfulness does not compromise explanation conciseness. Interestingly, combining IG seeding with SAM achieves the strongest performance, suggesting that the combination of saliency-informed initialization and spatial priors helps \textbf{H-Sets} converge on the most semantically coherent interactions. 

\begin{table}[t]
\centering
\caption{
Ablation of seed initialization strategies on the faithfulness $\mathrm{ROAD}$ metric (higher is better). 
Values are mean $\pm$ std over 5 runs.
}
\label{tab:ablate_seed_road}
\resizebox{0.48\textwidth}{!}{
\begin{tabular}{l l c c c}
\toprule
\multicolumn{2}{c}{} &
\multicolumn{3}{c}{\textbf{Seed Initialization Methods}} \\
\cmidrule(l){3-5}
\textbf{Dataset} & \textbf{Model} 
& Random {+SAM} 
& IG {+SAM} 
& Random {+No SAM} \\
\midrule

\multirow{4}{*}{\textbf{ImageNet}}
& VGG         & 0.30{\scriptsize$\pm$0.05} & 0.34{\scriptsize$\pm$0.01} & 0.30{\scriptsize$\pm$0.05} \\
& ResNet   & 0.17{\scriptsize$\pm$0.05} & 0.26{\scriptsize$\pm$0.02} & 0.17{\scriptsize$\pm$0.06} \\
& DenseNet & 0.30{\scriptsize$\pm$0.04} & 0.38{\scriptsize$\pm$0.01} & 0.31{\scriptsize$\pm$0.06} \\
& MobileNet & 0.32{\scriptsize$\pm$0.04} & 0.37{\scriptsize$\pm$0.01} & 0.31{\scriptsize$\pm$0.06} \\
\midrule

\multirow{4}{*}{\textbf{CUB}}
& VGG         & 0.50{\scriptsize$\pm$0.04} & 0.65{\scriptsize$\pm$0.00} & 0.55{\scriptsize$\pm$0.08} \\
& ResNet   & 0.49{\scriptsize$\pm$0.02} & 0.63{\scriptsize$\pm$0.01} & 0.50{\scriptsize$\pm$0.04} \\
& DenseNet & 0.52{\scriptsize$\pm$0.03} & 0.65{\scriptsize$\pm$0.00} & 0.49{\scriptsize$\pm$0.08} \\
& MobileNet & 0.57{\scriptsize$\pm$0.02} & 0.60{\scriptsize$\pm$0.00} & 0.56{\scriptsize$\pm$0.02} \\
\bottomrule
\end{tabular}
}
\end{table}

\begin{table}[t]
\centering
\caption{
Ablation of seed initialization strategies on the Sparsity metric (higher is better). 
Values are mean $\pm$ std over 5 runs.
}
\label{tab:ablate_seed_sparsity}
\resizebox{0.48\textwidth}{!}{
\begin{tabular}{l l c c c}
\toprule
\multicolumn{2}{c}{} &
\multicolumn{3}{c}{\textbf{Seed Initialization Methods}} \\
\cmidrule(l){3-5}
\textbf{Dataset} & \textbf{Model} 
& Random {+SAM} 
& IG {+SAM} 
& Random {+No SAM} \\
\midrule

\multirow{4}{*}{\textbf{ImageNet}}
& VGG         & 0.93{\scriptsize$\pm$0.01} & 0.95{\scriptsize$\pm$0.01} & 0.93{\scriptsize$\pm$0.01} \\
& ResNet   & 0.93{\scriptsize$\pm$0.01} & 0.98{\scriptsize$\pm$0.01} & 0.93{\scriptsize$\pm$0.01} \\
& DenseNet & 0.94{\scriptsize$\pm$0.01} & 0.94{\scriptsize$\pm$0.01} & 0.94{\scriptsize$\pm$0.01} \\
& MobileNet & 0.92{\scriptsize$\pm$0.01} & 0.95{\scriptsize$\pm$0.01} & 0.92{\scriptsize$\pm$0.01} \\
\midrule

\multirow{4}{*}{\textbf{CUB}}
& VGG         & 0.91{\scriptsize$\pm$0.01} & 0.93{\scriptsize$\pm$0.02} & 0.90{\scriptsize$\pm$0.01} \\
& ResNet   & 0.91{\scriptsize$\pm$0.01} & 0.92{\scriptsize$\pm$0.02} & 0.91{\scriptsize$\pm$0.01} \\
& DenseNet & 0.91{\scriptsize$\pm$0.01} & 0.91{\scriptsize$\pm$0.02} & 0.91{\scriptsize$\pm$0.01} \\
& MobileNet & 0.89{\scriptsize$\pm$0.01} & 0.89{\scriptsize$\pm$0.02} & 0.89{\scriptsize$\pm$0.01} \\
\bottomrule
\end{tabular}
}
\end{table}

\subsection{Ablation: Interaction set}\label{ablation:interactionSet}

\cref{tab:ablate_sets_road} and \cref{tab:ablate_sets_sparsity} analyze how varying the number of interaction sets $|\mathcal{S}|$ influences explanation quality. Each set captures up to 2,000 highly interacting features for ImageNet and 3000 features for CUB, sorted by interaction strength. Because this construction already prioritizes the most influential interactions, adding more sets primarily introduces weaker feature groups that contribute marginally to the model’s output. As a result, in \cref{tab:ablate_sets_road}, the faithfulness score ($\text{ROAD}_{\text{AOPC}}$) improves slightly from one to five sets but shows diminishing returns beyond that point.

Sparsity exhibits a corresponding downward trend in \cref{tab:ablate_sets_sparsity} as more sets are added, reflecting that weaker interactions disperse attribution scores over less relevant regions. The configuration with five sets achieves the best trade-off between interpretability and coverage; we therefore fix $|\mathcal{S}|=5$ in all main experiments. In terms of runtime, computational cost scales approximately linearly with the number of interaction sets—e.g.,  for VGG on CUB, average runtime increases from 15.8 s (1 set) to 20.7 s (3 sets), 25.0 s (5 sets), 30.3 s (7 sets), and 35.6 s (9 sets).

\begin{table}[t]
\centering
\caption{
Ablation of the number of feature interaction sets on the faithfulness $\mathrm{ROAD}$ metric (higher is better). 
Values are mean $\pm$ std over 5 runs.
}
\label{tab:ablate_sets_road}
\resizebox{0.48\textwidth}{!}{
\begin{tabular}{l l c c c c c}
\toprule
\multicolumn{2}{c}{} &
\multicolumn{5}{c}{\textbf{Number of Interaction Sets}} \\
\cmidrule(l){3-7}
\textbf{Dataset} & \textbf{Model} 
& 1 Set & 3 Sets & 5 Sets & 7 Sets & 9 Sets \\
\midrule

\multirow{4}{*}{\textbf{ImageNet}}
& VGG         & 0.33{\scriptsize$\pm$0.02} & 0.34{\scriptsize$\pm$0.01} & 0.34{\scriptsize$\pm$0.01} & 0.33{\scriptsize$\pm$0.01} & 0.35{\scriptsize$\pm$0.03} \\
& ResNet   & 0.25{\scriptsize$\pm$0.02} & 0.26{\scriptsize$\pm$0.01} & 0.26{\scriptsize$\pm$0.02} & 0.25{\scriptsize$\pm$0.03} & 0.25{\scriptsize$\pm$0.02} \\
& DenseNet & 0.37{\scriptsize$\pm$0.01} & 0.37{\scriptsize$\pm$0.02} & 0.38{\scriptsize$\pm$0.01} & 0.38{\scriptsize$\pm$0.02} & 0.39{\scriptsize$\pm$0.03} \\
& MobileNet & 0.35{\scriptsize$\pm$0.02} & 0.36{\scriptsize$\pm$0.01} & 0.37{\scriptsize$\pm$0.01} & 0.37{\scriptsize$\pm$0.02} & 0.37{\scriptsize$\pm$0.01} \\
\midrule

\multirow{4}{*}{\textbf{CUB}}
& VGG         & 0.66{\scriptsize$\pm$0.02} & 0.65{\scriptsize$\pm$002} & 0.65{\scriptsize$\pm$0.00} & 0.66{\scriptsize$\pm$0.02} & 0.69{\scriptsize$\pm$0.04} \\
& ResNet   & 0.58{\scriptsize$\pm$0.02} & 0.62{\scriptsize$\pm$0.02} & 0.63{\scriptsize$\pm$0.01} & 0.61{\scriptsize$\pm$0.01} & 0.64{\scriptsize$\pm$0.02} \\
& DenseNet & 0.63{\scriptsize$\pm$0.01} & 0.65{\scriptsize$\pm$0.01} & 0.65{\scriptsize$\pm$0.00} & 0.66{\scriptsize$\pm$0.02} & 0.66{\scriptsize$\pm$0.03} \\
& MobileNet & 0.58{\scriptsize$\pm$0.02} & 0.60{\scriptsize$\pm$0.02} & 0.60{\scriptsize$\pm$0.02} & 0.59{\scriptsize$\pm$0.02} & 0.59{\scriptsize$\pm$0.01} \\
\bottomrule
\end{tabular}
}
\end{table}

\begin{table}[t]
\centering
\caption{
Ablation of the number of feature interaction sets on the Sparsity metric (higher is better). 
Values are mean $\pm$ std over 5 runs.
}
\label{tab:ablate_sets_sparsity}
\resizebox{0.48\textwidth}{!}{
\begin{tabular}{l l c c c c c}
\toprule
\multicolumn{2}{c}{} &
\multicolumn{5}{c}{\textbf{Number of Interaction Sets}} \\
\cmidrule(l){3-7}
\textbf{Dataset} & \textbf{Model}
& 1 Set & 3 Sets & 5 Sets & 7 Sets & 9 Sets \\
\midrule

\multirow{4}{*}{\textbf{ImageNet}}
& VGG         & 0.98{\scriptsize$\pm$0.00} & 0.96{\scriptsize$\pm$0.01} & 0.95{\scriptsize$\pm$0.01} & 0.93{\scriptsize$\pm$0.01} & 0.92{\scriptsize$\pm$0.02} \\
& ResNet   & 0.98{\scriptsize$\pm$0.00} & 0.96{\scriptsize$\pm$0.01} & 0.95{\scriptsize$\pm$0.01} & 0.93{\scriptsize$\pm$0.01} & 0.91{\scriptsize$\pm$0.02} \\
& DenseNet & 0.98{\scriptsize$\pm$0.00} & 0.96{\scriptsize$\pm$0.01} & 0.94{\scriptsize$\pm$0.01} & 0.93{\scriptsize$\pm$0.01} & 0.92{\scriptsize$\pm$0.02} \\
& MobileNet & 0.98{\scriptsize$\pm$0.00} & 0.95{\scriptsize$\pm$0.01} & 0.95{\scriptsize$\pm$0.01} & 0.90{\scriptsize$\pm$0.01} & 0.88{\scriptsize$\pm$0.02} \\
\midrule

\multirow{4}{*}{\textbf{CUB}}
& VGG         & 0.97{\scriptsize$\pm$0.00} & 0.95{\scriptsize$\pm$0.01} & 0.93{\scriptsize$\pm$0.01} & 0.93{\scriptsize$\pm$0.02} & 0.91{\scriptsize$\pm$0.02} \\
& ResNet   & 0.97{\scriptsize$\pm$0.00} & 0.95{\scriptsize$\pm$0.01} & 0.92{\scriptsize$\pm$0.02} & 0.91{\scriptsize$\pm$0.02} & 0.90{\scriptsize$\pm$0.02} \\
& DenseNet & 0.98{\scriptsize$\pm$0.00} & 0.95{\scriptsize$\pm$0.01} & 0.91{\scriptsize$\pm$0.02} & 0.91{\scriptsize$\pm$0.02} & 0.89{\scriptsize$\pm$0.02} \\
& MobileNet & 0.97{\scriptsize$\pm$0.00} & 0.94{\scriptsize$\pm$0.01} & 0.89{\scriptsize$\pm$0.02} & 0.90{\scriptsize$\pm$0.02} & 0.89{\scriptsize$\pm$0.03} \\
\bottomrule
\end{tabular}
}
\end{table}

\vspace{2mm}
\noindent
\textbf{Computational cost.} While the computation of second-order dependencies in \textbf{H-Sets} introduces an expected overhead, the runtime remains tractable and controllable via hyperparameter tuning. A comparative analysis of empirical runtimes are provided in Appendix~\ref{appendix:runtime}. 


\section{Limitations and Future Work}
While \textbf{H-Sets} is a principled framework for higher-order interactions, some limitations exist. First, adapting \textbf{H-Sets} to Vision Transformers (ViTs) is non-trivial due to their token-based representations. Exploring Hessian-guided interactions across self-attention blocks remains a key direction for future work. Second, performance depends on two hyperparameters—interaction threshold ($\mu$) and set size ($\nu$)—which may require light tuning across architectures. Finally, second-order derivatives introduce computational overhead compared to first-order methods. Although this cost is offset by higher faithfulness, future work will explore efficient Hessian approximations to further expand \textbf{H-Sets} applicability.

\section{Conclusion}
In this work, we propose \textbf{H-Sets}, a principled framework for discovering and attributing higher-order feature interactions in image classifiers. By leveraging Hessian-based interaction detection and a set-level attribution formulation grounded in cooperative game theory, \textbf{H-Sets} captures joint feature influences while satisfying key attribution axioms. Comprehensive experiments on ImageNet and CUB demonstrate that \textbf{H-Sets} produces explanations that are both sparser and more faithful than existing interaction-based methods. 

\section*{Acknowledgement}
This research was supported by Toyota-InfoTech Labs through Unrestricted Research Funds.
{
    \small
    \bibliographystyle{ieeenat_fullname}
    \bibliography{main}
}

\clearpage
\setcounter{page}{1}
\maketitlesupplementary

\appendix

\section*{Appendix}

\section{Computational cost}\label{appendix:runtime}
\begin{table}[h]
\centering
\caption{
Time complexity comparison of our method (H-Sets) versus Integrated Gradients (IG)~\cite{ig}, context-aware first-order (CAFO), and second-order explanations(CASO)~\cite{caso}, Archipelago (Arch)~\cite{arch} and MOXI~\cite{moxi}. Values are mean $\pm$ std in seconds over 5 runs.
}
\label{tab:time_complexity}
\resizebox{0.48\textwidth}{!}{
\begin{tabular}{l l c c c c c c}
\toprule
\multicolumn{2}{c}{} &
\multicolumn{6}{c}{\textbf{Explanation Methods}} \\
\cmidrule(l){3-8}
\textbf{Dataset} & \textbf{Model}
& IG & Arch & CAFO & CASO & MoXI & \textbf{H-Sets} \\
\midrule

\multirow{4}{*}{\textbf{ImageNet}}
& VGG       & 0.13{\scriptsize$\pm$0.02} & 9.44{\scriptsize$\pm$3.08} & 0.04{\scriptsize$\pm$0.01} & 2.44{\scriptsize$\pm$0.56} & 20.51{\scriptsize$\pm$3.06} & 23.58{\scriptsize$\pm$3.34} \\
& ResNet    & 0.19{\scriptsize$\pm$0.04} & 12.18{\scriptsize$\pm$3.36} & 0.17{\scriptsize$\pm$0.08} & 2.23{\scriptsize$\pm$0.55} & 16.35{\scriptsize$\pm$2.89} & 16.07{\scriptsize$\pm$0.93} \\
& DenseNet  & 0.25{\scriptsize$\pm$0.05} & 13.82{\scriptsize$\pm$4.82} & 0.08{\scriptsize$\pm$0.04} & 3.62{\scriptsize$\pm$0.82} & 18.00{\scriptsize$\pm$3.87} & 17.15{\scriptsize$\pm$1.09} \\
& MobileNet  & 0.09{\scriptsize$\pm$0.03} & 7.57{\scriptsize$\pm$2.39} & 0.13{\scriptsize$\pm$0.06} & 15.97{\scriptsize$\pm$5.81} & 9.96{\scriptsize$\pm$1.68} & 46.46{\scriptsize$\pm$5.54} \\
\midrule

\multirow{4}{*}{\textbf{CUB}}
& VGG          & 0.17{\scriptsize$\pm$0.04} & 9.92{\scriptsize$\pm$4.46} & 0.03{\scriptsize$\pm$0.02} & 2.09{\scriptsize$\pm$0.68} & 15.66{\scriptsize$\pm$3.20} & 25.00{\scriptsize$\pm$3.64} \\
& ResNet   & 0.08{\scriptsize$\pm$0.02} & 7.69{\scriptsize$\pm$3.09} & 0.06{\scriptsize$\pm$0.03} & 1.29{\scriptsize$\pm$0.29} & 16.35{\scriptsize$\pm$2.89} & 18.48{\scriptsize$\pm$0.93} \\
& DenseNet  & 0.27{\scriptsize$\pm$0.04} & 13.32{\scriptsize$\pm$3.74} & 0.17{\scriptsize$\pm$0.07} & 3.78{\scriptsize$\pm$0.53} & 15.66{\scriptsize$\pm$3.20} & 30.97{\scriptsize$\pm$8.53} \\
& MobileNet & 0.10{\scriptsize$\pm$0.09} & 5.92{\scriptsize$\pm$1.91} & 0.06{\scriptsize$\pm$0.04} & 3.55{\scriptsize$\pm$1.09} & 18.21{\scriptsize$\pm$16.82} & 17.39{\scriptsize$\pm$0.99} \\
\bottomrule
\end{tabular}
}
\end{table}

~\cref{tab:time_complexity} summarizes the average runtime per image (in seconds) across 1,000 validation samples. As expected, \textbf{H-Sets} requires more computation than first-order methods (e.g., IG) and some interaction-based methods. This overhead, however, is the direct result of computing second-order feature dependencies and enforcing attribution axioms (~\cref{sec:axioms}), which together yield more faithful explanations. Unlike region-masking approaches such as Arch and MOXI, H-Sets operates at a fine-grained, pixel-level resolution and produces mathematically interpretable interaction scores rather than heuristic saliency masks. 

Despite this, the runtime remains tractable and flexible. 
Both the number of features per interaction set ($\nu$) and the Hessian threshold ($\mu$) offer effective control over the accuracy–efficiency trade-off: smaller $\nu$ or higher $\mu$ values reduce runtime substantially with minimal effect on $\text{ROAD}_{\text{AOPC}}$ (see~\cref{tab:ablation-imagenet-side-by-side}). 

\section{Qualitative results}\label{appendix:qualitativeexamples}
Figure ~\ref{fig:appendix_b} show additional qualitative
comparisons of attribution methods across diverse ImageNet samples.
Each row shows the original image followed by saliency maps produced by different methods.

\begin{figure}[h]
 
    \includegraphics[width=\linewidth]{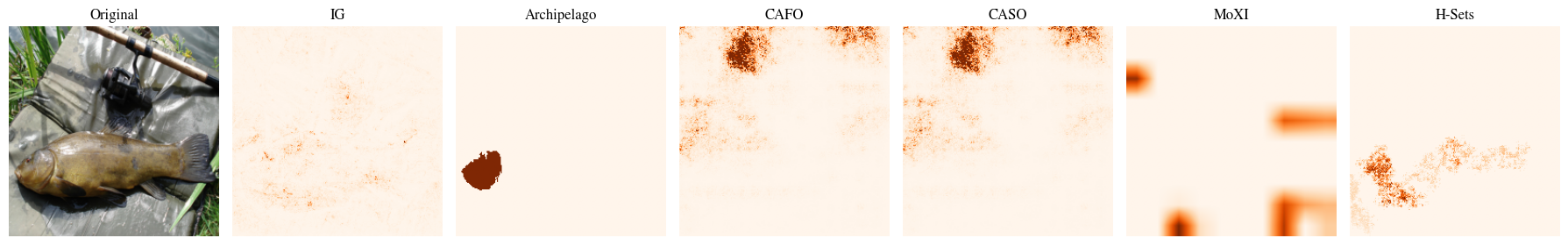}\\[3pt]
    \includegraphics[width=\linewidth]{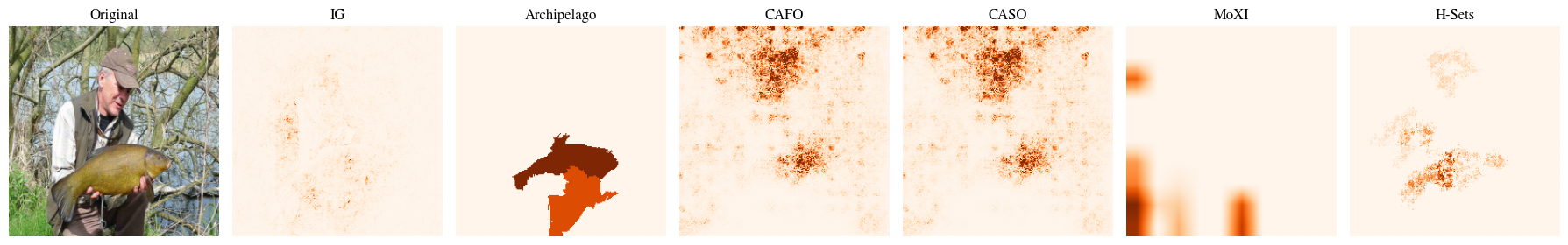}\\[3pt]
    \includegraphics[width=\linewidth]{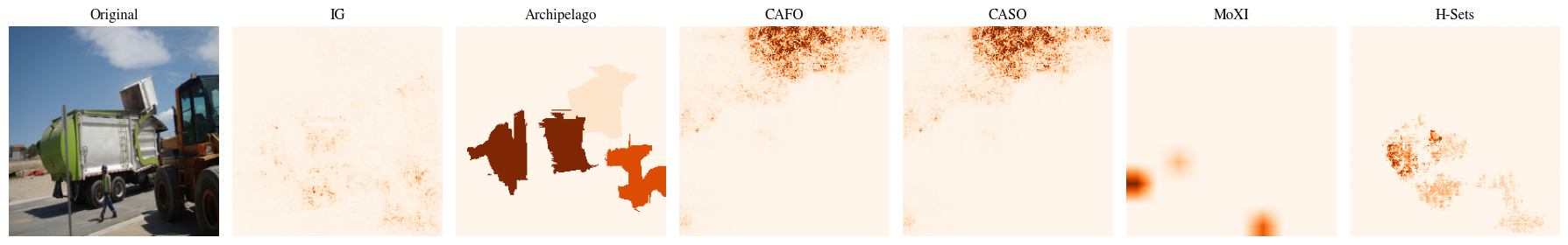}\\[3pt]
    \includegraphics[width=\linewidth]{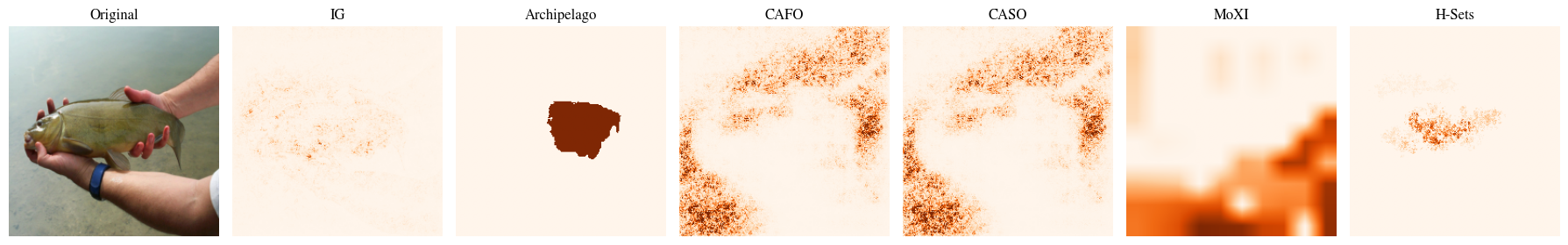}\\[3pt]
    \includegraphics[width=\linewidth]{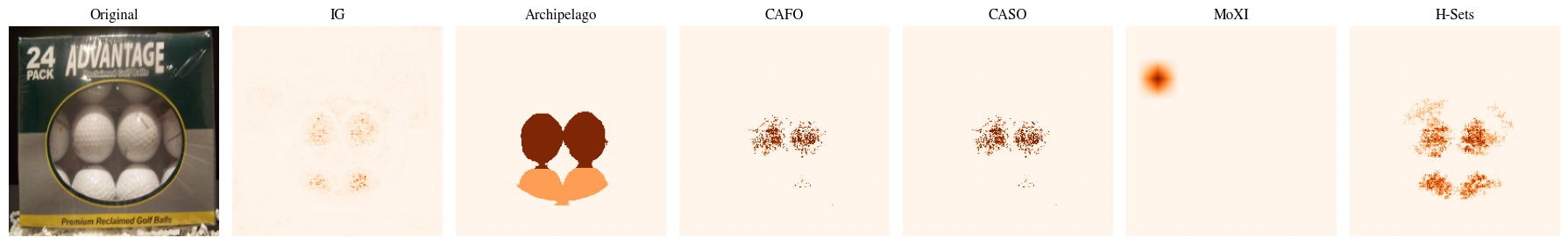}\\[3pt]
    \includegraphics[width=\linewidth]{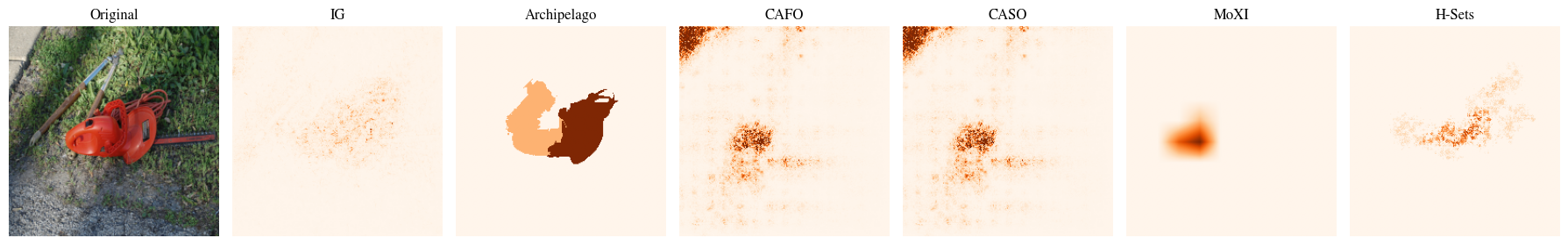}\\[3pt]
    \includegraphics[width=\linewidth]{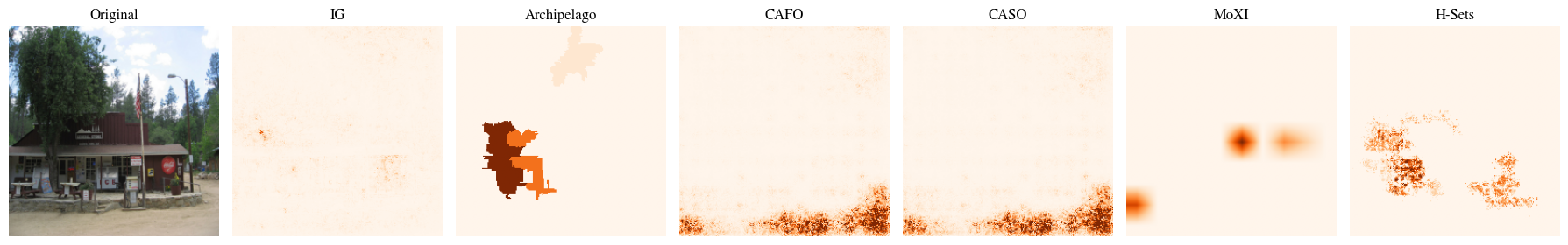}\\[3pt]
    \includegraphics[width=\linewidth]{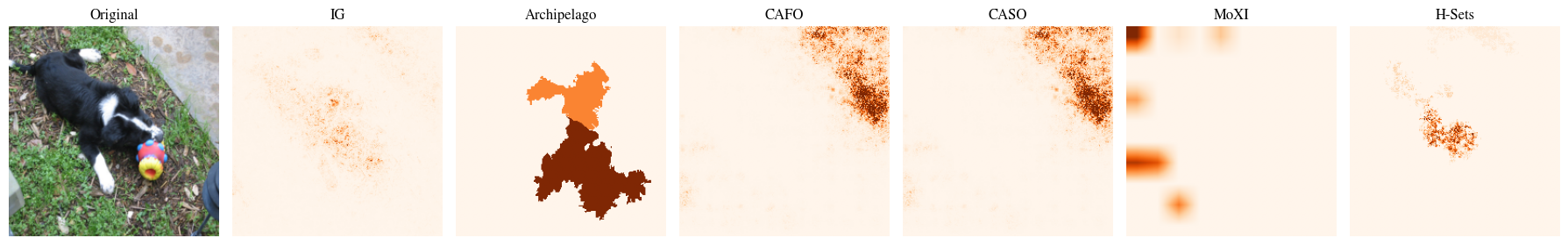}\\[3pt]
    \includegraphics[width=\linewidth]{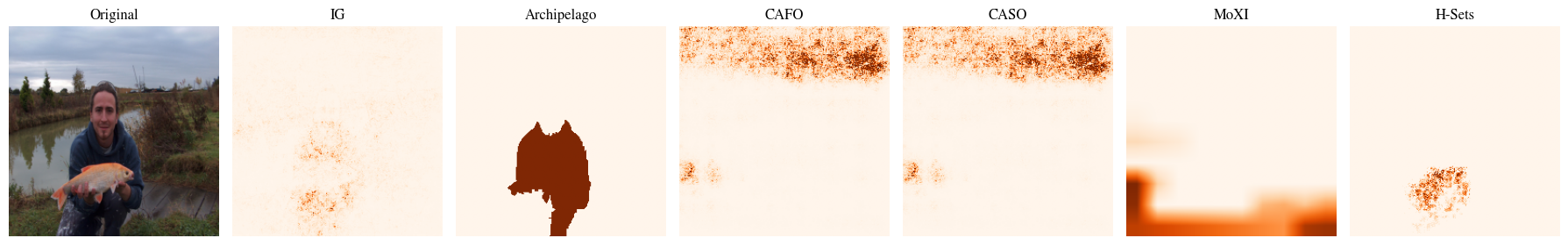}\\[3pt]
    \includegraphics[width=\linewidth]{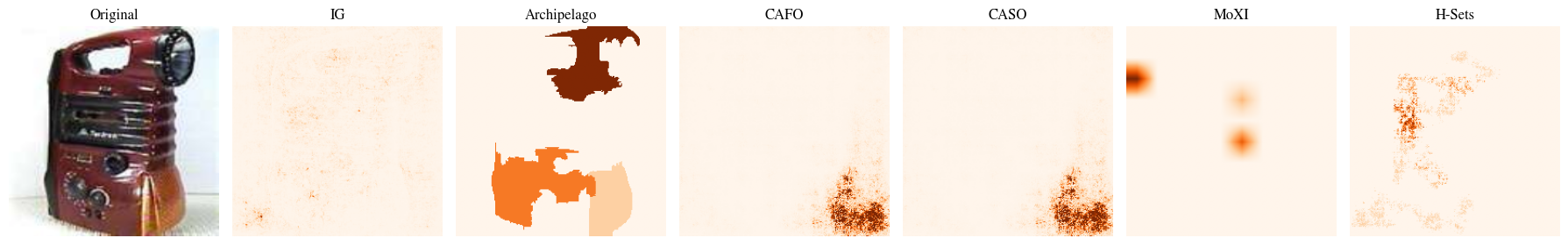}\\[3pt]
    \includegraphics[width=\linewidth]{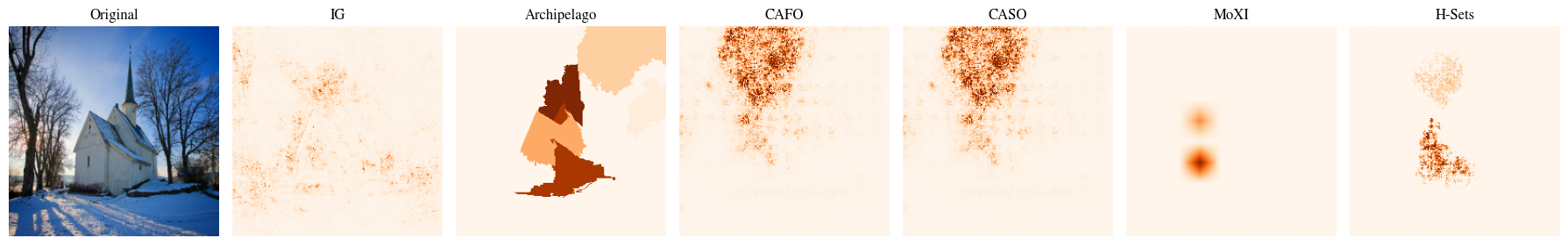}
    \caption{%
        Qualitative comparison of attribution methods .
        Each row shows, from left to right: the original image, and
        saliency maps from Integrated Gradients (IG) \cite{ig}, Archipelago \cite{arch}, Context-aware First Order explanations (CAFO) \cite{caso}, Context-aware Second Order explanations (CASO) \cite{caso}, MOXI~\cite{moxi} and our proposed method, \textbf{H-Sets}. Darker red indicates higher attribution.
    }
    \label{fig:appendix_b}
\end{figure}
 
\clearpage

\section{Sensitivity to Non-Semantic Features}
\label{appendix:decoy_analysis}

A potential concern regarding the use of Segment Anything (SAM)~\cite{sam} as a spatial prior is whether it ``forces" the explanation to be semantic, thereby masking a classifier's reliance on non-semantic shortcuts or spurious correlations. 

To investigate this, we evaluate H-Sets on the DecoyMNIST~\cite{erion2021improving}. In this setup, a digit classifier is intentionally trained on data where a fixed, non-semantic ``decoy'' patch is correlated with the class label. A faithful attribution method must be able to ignore the semantic digit and localize the spurious patch if that is what the model is truly utilizing for its prediction.

As shown in Figure~\ref{fig:decoy}, H-Sets correctly identifies these non-semantic regions as highly salient. This is because the core of our interaction discovery is \textit{curvature-driven} via the input Hessian. While SAM provides a grouping prior to organize these interactions into coherent sets, the actual attribution scores—calculated using Harsanyi dividends and IDG-Vis—are derived strictly from second-order derivatives. 

\begin{figure}[h]
    \centering
    \includegraphics[width=0.5\linewidth]{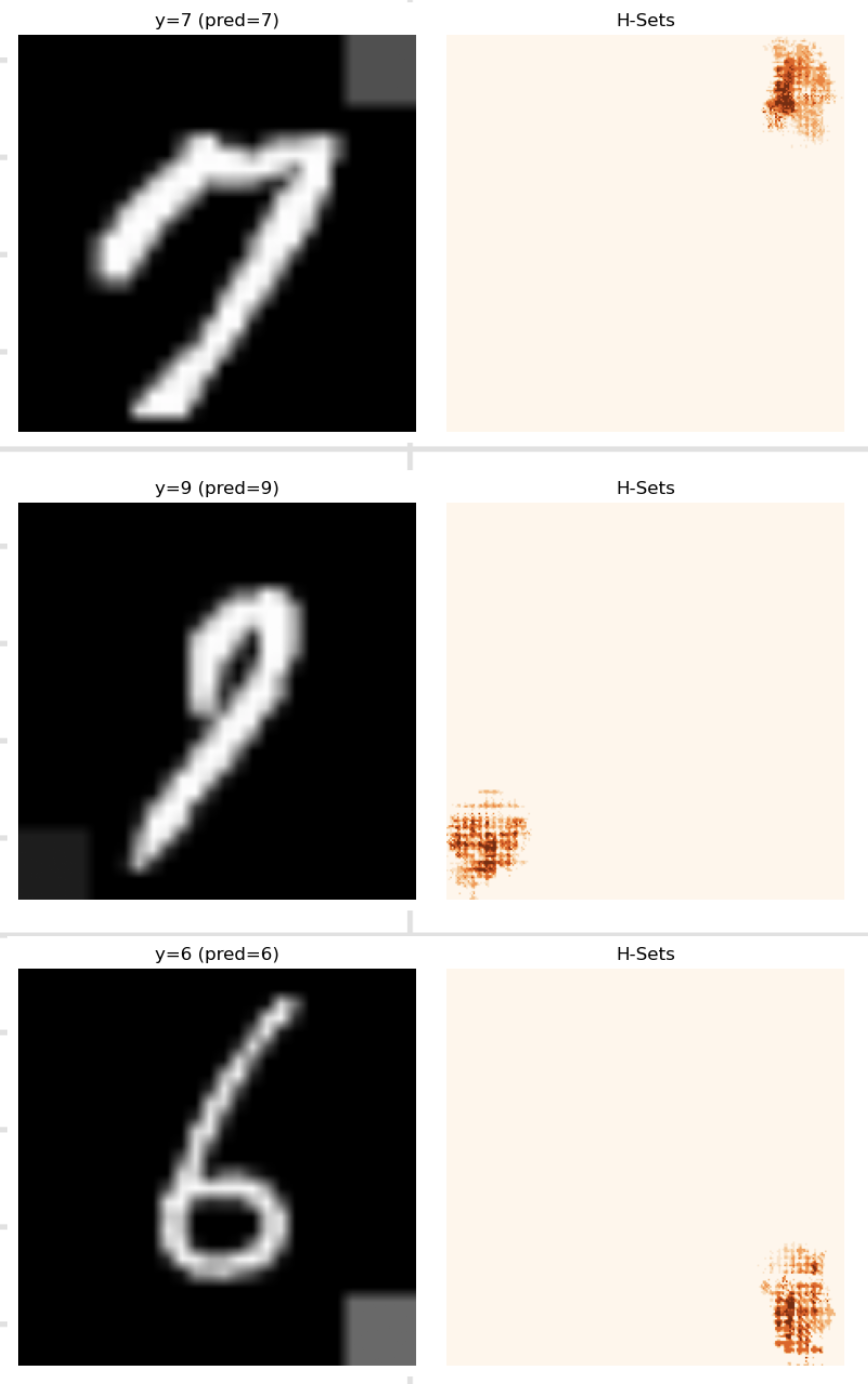}
    \caption{H-Sets attribution on DecoyMNIST. Despite the presence of a strong semantic prior (the digit), H-Sets successfully localizes the spurious decoy patch, demonstrating that the method captures the model's true reliance on non-semantic shortcuts.}
    \label{fig:decoy}
\end{figure}


\section{Algorithms}\label{appendix:algorithms}
Algorithms for H-Sets and IDG-Vis are provided in \cref{alg:hessian} and \cref{alg:dividends}.

\begin{algorithm}[h]
\caption{H-Sets Algorithm}\label{alg:hessian}
\scriptsize
    \begin{algorithmic}
        \Function{GetSet}{example $\mathbf{x}$, model $f$, gradient $\nabla f(\mathbf{x})$, index $j$, threshold $\mu$, max elements $\nu$, interaction set $\mathcal{I}$}
        \State $\mathbf{H}_j \gets \frac{\partial}{ \partial x_i} (\nabla f(\mathbf{x}))_j$  \Comment{\textit{Create Hessian}}
        \State $\mathbf{H}_j' \gets \{i \in \{1, 2, ..., d\} | \mathbf{H}_j[i] > \mu\}$  \Comment{\textit{Consider interactions above threshold}}
        \State next $\gets \{\}$
        \ForAll{$n \in \mathbf{H}_j'$}
        \If{$n = 0$}
            \State continue
        \ElsIf{$n \notin \mathcal{I}$ and $|\mathcal{I}| < \nu$} 
            \State $\mathcal{I} \gets \mathcal{I} \cup n$ \Comment{\textit{Add interaction to current feature interaction set}}
            \State next $\gets $ next $ \cup \, n$
        \EndIf
        \EndFor
        \ForAll{$m \in $ next}
        \State \Call  {GetSet}{$\mathbf{x}$, $f$, $\nabla f(\mathbf{x})$, $\text{argmin}_{j} \{ \text{next}[j] = m \}$, $\mu$, $\nu$, $\mathcal{I}$} \Comment{\textit{Find more interactions}}
        \EndFor
        \State \Return $\mathcal{I}$
        \EndFunction
        \\
        \Function{GenerateSets}{example $\mathbf{x}$, model $f$, threshold $\mu$, max elements $\nu$, number of interaction sets $k$}
        \State masks $\gets$ SAM($\mathbf{x}$); 
        \State indexes $\gets \text{sort(IntegratedGradients($\mathbf{x}, f$))}$ \Comment{\textit{Get Starting Features}}
        \State $\mathcal{S} \gets \{\}$
        \For {index $ \in $ indexes}
        \If{index $\in \{i | i \in m, \forall m \in \text{masks}\}$}
        \State $\mathcal{I} \gets$ \Call  {GetSet}{$(\mathbf{x}, f, \nabla f(\mathbf{x}), \text{index}, \mu, \nu, \mathcal{I})$}
        \State $\mathcal{S} \gets \mathcal{S} \cup \mathcal{I}$ \Comment{\textit{Add Interaction Set into Total Set}}
        \State masks $\gets \{ m \in \text{masks} | \text{index} \notin m \}$
        \EndIf
        \If{$|S| = k$} \\
        \quad\quad\quad\quad\quad\textbf{break}
        \EndIf
        \EndFor
        \State \Return $\mathcal{S}$
        \EndFunction
    \end{algorithmic}
\end{algorithm}

\begin{algorithm}[ht]
\caption{Integrated Directional Gradients for Vision (IDG-Vis)}\label{alg:dividends}
\scriptsize
    \begin{algorithmic}
        \Function{AttributeSets}{example $\mathbf{x}$, model $f$, interaction set $\mathcal{I}$}
        \State $\mathbf{x}' \gets \mathbf{0}^{H \times W \times C}$ \Comment{\textit{Create baseline}}
        \State $a \gets 0$ 
        \ForAll{$T \sim \mathcal{ U}(\mathcal{P}(\mathcal{I}))$}
            \State $\mathbf{a} \gets x_i \in T \ ?\ (\mathbf{x} - \mathbf{x}') : 0$
            \State $\hat{\mathbf{a}} \gets \frac{\mathbf{a}}{||\mathbf{a}||}$ \Comment{\textit{Create directional vector}}
            \State $A \gets 0$ 
            \For{$k \gets 0$ \textbf{to} $m$}
                \State $\nabla_T f(\mathbf{x}) \gets \nabla f(\mathbf{x}' + \frac{k}{m}(\mathbf{x} - \mathbf{x}')) \cdot \hat{\mathbf{a}}$ \Comment{\textit{Compute Directional Gradient}}
                \State $A \gets A + \nabla_T f(\mathbf{x}) $ \Comment{\textit{Create Integrated Directional Gradient for Vision}}
            \EndFor
            \State $A \gets \frac{1}{m+1} \cdot (A)$
            \State $a \gets a + A$
        \EndFor
        \EndFunction
    \end{algorithmic}
\end{algorithm}

\section{Proof for axioms}\label{appendix:proof}

In this section, we show that our proposed attribution method satisfies all the axioms listed in \cref{sec:axioms}.

\begin{proposition}
$a(\mathcal{I})$ satisfies \textnormal{Axioms} 1-4
\end{proposition}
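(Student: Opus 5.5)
We work with the exact (non-sampled) form of Equation~\eqref{val}, in which $a(\mathcal{I})$ is the sum of $\text{IDG-Vis}(T)$ over all subsets $T \subseteq \mathcal{I}$. The Monte Carlo version is an unbiased, rescaled estimate of this sum and inherits its properties in expectation. The single fact behind all four axioms is that each summand is non-negative: $\text{IDG-Vis}(T)$ integrates $\nabla_T f = |\nabla f \cdot \hat{\mathbf{a}}|$ over $\alpha \in [0,1]$, so $\text{IDG-Vis}(T) \ge 0$ for every $T$. We therefore write
\begin{equation*}
a(\mathcal{I}) = \sum_{T \subseteq \mathcal{I}} g(T), \qquad g(T) := \text{IDG-Vis}(T) \ge 0 ,
\end{equation*}
and derive each axiom from this representation.

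\emph{Axiom 1 (non-negativity).} A sum of non-negative terms is non-negative, so $a(T) \ge 0$.

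\emph{Axiom 2 (normality).} For $T=\emptyset$ the direction vector is $\mathbf{a}=\mathbf{0}$, so $\hat{\mathbf{a}}$ in Equation~\eqref{dir} is undefined. We adopt the convention $\hat{\mathbf{a}}=\mathbf{0}$ when $\mathbf{a}=\mathbf{0}$, which is the natural choice since the directional derivative along the zero vector vanishes. This gives $g(\emptyset)=0$. Since $\emptyset$ is the only subset of $\emptyset$, it follows that $a(\emptyset)=g(\emptyset)=0$. The same convention is needed whenever $x_T = x'_T$, and under it those subsets also contribute zero.

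\emph{Axiom 3 (monotonicity).} If $R \subseteq T$, then $\mathcal{P}(R) \subseteq \mathcal{P}(T)$, so
\begin{equation*}
a(T) - a(R) = \sum_{U \in \mathcal{P}(T) \setminus \mathcal{P}(R)} g(U) \ge 0 .
\end{equation*}

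\emph{Axiom 4 (superadditivity).} If $R \cap T = \emptyset$, then $\mathcal{P}(R) \cap \mathcal{P}(T) = \{\emptyset\}$. The terms of $a(R)$ and $a(T)$ therefore overlap only at $g(\emptyset)$, which is $0$ by the normality step. Both families are contained in $\mathcal{P}(R \cup T)$, and every remaining subset of $R \cup T$ contributes a non-negative $g$, so $a(R \cup T) \ge a(R) + a(T)$.

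The main obstacle is reconciling these exact-sum arguments with the Monte Carlo estimator. If independent random samples are drawn for $a(R)$ and $a(T)$, monotonicity and superadditivity can fail for individual realizations. I would handle this in one of two ways. The first is to state the axioms for the expectation of the estimator, rescaled by $2^{|\mathcal{I}|}$, which equals the exact sum, so all four inequalities hold in expectation. The second is to use coupled sampling, drawing the subsets of $R\cup T$ and restricting them to $R$ and $T$, which makes the inequalities hold pathwise. The second requirement is fixing the $\hat{\mathbf{a}}$ convention for the empty set, since normality depends on it.
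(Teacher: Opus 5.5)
Your proof is correct and is essentially the paper's argument made explicit. The paper only asserts that the TU-game is positive and cites Dehez for Axioms 1--4; your representation $a(\mathcal{I})=\sum_{T\subseteq\mathcal{I}} g(T)$ with $g\ge 0$ is exactly that positivity (by M\"obius inversion the $g(T)$ are the Harsanyi dividends of $a$), and you derive the four properties directly while also making explicit the zero-direction convention and the exact-sum (rather than subset-average) reading that the paper leaves implicit.

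One caution about your last paragraph: if the coupling intersects each sampled $U\subseteq R\cup T$ with $R$ and $T$, the inequalities do not hold pathwise, because $g$ itself is not monotone (if $\partial_2 f\equiv 0$ along the path and $|x_1-x'_1|=|x_2-x'_2|$, then $g(\{1,2\})=g(\{1\})/\sqrt{2}$). The coupling must instead keep only those sampled subsets that lie inside $R$ (resp.\ $T$), with a common rescaling factor for all three estimates.
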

Since the TU-game is always positive, it must be true that non-negativity, normality, monotonicity, and superadditivity are satisfied \cite{dehez2017harsanyi}. 

\begin{lemma}[Approximate Completeness]
Let $\mathcal{S}$ be the set of feature interaction sets detected by H-Sets for an input $\mathbf{x}$ and baseline $\mathbf{x}'$. Then the total interaction attribution satisfies:
\[
\sum_{\mathcal{I} \in \mathcal{S}} a(\mathcal{I}) \leq f(\mathbf{x}) - f(\mathbf{x}')
\]
with equality if and only if the union of subsets $T \sim \mathcal{U}(\mathcal{P}(\mathcal{I}))$ for all $\mathcal{I} \in \mathcal{S}$ covers the full support of the input difference vector $\mathbf{x} - \mathbf{x}'$.
\end{lemma}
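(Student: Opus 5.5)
The plan is to reduce the statement to the fundamental theorem of calculus along the straight-line path $\gamma(\alpha)=\mathbf{x}'+\alpha(\mathbf{x}-\mathbf{x}')$, exactly as in the completeness proof of Integrated Gradients~\cite{ig}. Since $f$ is differentiable after the ReLU smoothing $h$, we have
\[
f(\mathbf{x})-f(\mathbf{x}')=\int_0^1 \nabla f(\gamma(\alpha))\cdot(\mathbf{x}-\mathbf{x}')\,d\alpha .
\]
For any subset $T$ with direction vector $\mathbf{a}_T$ from \eqref{vec}, we have $\mathbf{a}_T=(\mathbf{x}-\mathbf{x}')\odot\mathbf{1}_T$. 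Hence $\int_0^1\nabla f(\gamma(\alpha))\cdot\mathbf{a}_T\,d\alpha$ is exactly the portion of the above integral carried by the coordinates in $T$. Using \eqref{dir}, $\text{IDG-Vis}(T)$ from \eqref{idg} equals $\|\mathbf{a}_T\|^{-1}$ times the absolute value of this quantity taken pointwise in $\alpha$. The first step is therefore to rewrite each summand of $a(\mathcal{I})$ in \eqref{val} as a coordinate-restricted path integral, taking the scale factor $\|\mathbf{a}_T\|$ as absorbed. This matches the unnormalized reading of the directional derivative, and I would state it as the convention under which the lemma is meant.

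The second step is to organize the sampled subsets. Let $\mathcal{T}$ denote the family of all sampled $T$ across all $\mathcal{I}\in\mathcal{S}$. I would treat these subsets as disjoint, refining them to disjoint pieces if necessary; this is natural because the sets produced by Algorithm~\ref{alg:hessian} are built greedily and masks are removed after use. Let $U=\bigcup_{T\in\mathcal{T}}T$. Summing the coordinate-restricted integrals then gives $\int_0^1\nabla f(\gamma(\alpha))\cdot\big((\mathbf{x}-\mathbf{x}')\odot\mathbf{1}_U\big)\,d\alpha$. The complement $U^c$ contributes the remainder $R=\int_0^1\nabla f(\gamma(\alpha))\cdot\big((\mathbf{x}-\mathbf{x}')\odot\mathbf{1}_{U^c}\big)\,d\alpha$.

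Under the positive TU-game framework already invoked for Axioms 1--4, each coordinate block has a nonnegative contribution. Then $R\ge 0$, and
\[
\sum_{\mathcal{I}\in\mathcal{S}}a(\mathcal{I})=f(\mathbf{x})-f(\mathbf{x}')-R\le f(\mathbf{x})-f(\mathbf{x}').
\]
When $U$ contains $\operatorname{supp}(\mathbf{x}-\mathbf{x}')$, we have $(\mathbf{x}-\mathbf{x}')\odot\mathbf{1}_{U^c}=0$, so $R=0$ and equality holds. For the converse, I would argue that $R=0$ forces every coordinate outside $U$ on the support to carry zero contribution. In the ``if and only if'' reading, such coordinates are identified with being covered, or are excluded by a genericity assumption.

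The main obstacle is the absolute value in $\nabla_T f=|\nabla f\cdot\hat{\mathbf{a}}|$, together with the normalization by $\|\mathbf{a}\|$. Taking the modulus inside the integral only gives $\int|g|\ge|\int g|$, which pushes the inequality in the wrong direction. Summing absolute values over blocks likewise breaks the exact additive decomposition. My first attempt would be to prove that positivity of the game means $\nabla f(\gamma(\alpha))\cdot\mathbf{a}_T\ge 0$ along the path, which makes the modulus inert, and to make this sign condition explicit as a hypothesis. Failing that, I would restrict the equality case to this sign-consistent regime, and keep only the inequality for the remainder term in general.
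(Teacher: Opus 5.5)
You follow the same route as the paper: IG completeness along $\gamma$, swapping sum and integral, then arguing that uncovered coordinates make the sum an under-approximation. But the steps that are supposed to turn this into a proof do not hold, so what you would establish is a different statement under added hypotheses.

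The decisive failure is deriving $R\ge 0$ from the positive TU-game. Positivity in the paper comes only from the modulus in $\nabla_T f=|\nabla f\cdot\hat{\mathbf a}_T|$. It gives $a(T)\ge 0$ and says nothing about the sign of the \emph{signed} integral $R=\int_0^1\nabla f(\gamma(\alpha))\cdot\big((\mathbf x-\mathbf x')\odot\mathbf{1}_{U^c}\big)\,d\alpha$. That integral is just the IG mass of the uncovered pixels, and it is negative whenever they push the logit down; your own identity then gives $\sum a>f(\mathbf x)-f(\mathbf x')$. In fact the inequality cannot hold unconditionally. Its left side is a sum of nonnegative terms, so it fails whenever $f(\mathbf x)<f(\mathbf x')$ and some $a(\mathcal I)>0$.

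Your identity $\sum_{\mathcal I}a(\mathcal I)=\int_0^1\nabla f(\gamma(\alpha))\cdot\big((\mathbf x-\mathbf x')\odot\mathbf{1}_U\big)\,d\alpha$ also rests on two moves that change the quantity being bounded rather than rewrite it.

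\emph{Absorbing $\|\mathbf a_T\|$ is not a convention.} Take $f(\mathbf x)=\mathbf w\cdot\mathbf x$ with $\mathbf x,\mathbf x'$ differing only in pixel $1$, $\mathcal S=\{\{1\}\}$, and a single sample $T=\{1\}$. By \eqref{dir} you get $a=|w_1|$, whereas $f(\mathbf x)-f(\mathbf x')=w_1(x_1-x_1')$. So $w_1=1$, $x_1-x_1'=\tfrac12$ breaks the inequality, and $x_1-x_1'=2$ breaks the equality case despite full coverage.

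\emph{Refining the samples into disjoint pieces is not available.} The $t\in[50,100]$ draws $T\sim\mathcal U(\mathcal P(\mathcal I))$ all come from the same $\mathcal I$. Each pixel lies in about half of them, so it is counted roughly $t/2$ times in \eqref{val}. The greedy construction in Algorithm~\ref{alg:hessian} concerns different $\mathcal I$'s only, and does not enforce disjointness even there.

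Add to these the sign hypothesis you need to remove the modulus and the genericity assumption for the ``only if'' direction. What your argument really proves is this: for disjoint sets, unnormalized signed contributions, and $R\ge 0$, the sum is at most $f(\mathbf x)-f(\mathbf x')$, with equality iff $R=0$.

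The paper's proof glosses over exactly these points. It silently treats $|\nabla f\cdot\hat{\mathbf a}_T|$ as a signed, unnormalized, non-overlapping IG share, so you are not missing an idea it has. But a correct version has to put these conditions into the hypotheses, since they cannot be derived from the definitions in \eqref{dir} and \eqref{val}.
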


\begin{proof}
We define the linear interpolation path $\mathbf{x}(\alpha) = \mathbf{x}' + \alpha(\mathbf{x} - \mathbf{x}')$, for $\alpha \in [0, 1]$.

By definition, the directional gradient for interaction set $\mathcal{I}$ is:
\[
\nabla_{\mathcal{I}} f(\mathbf{x}(\alpha)) = \left| \nabla f(\mathbf{x}(\alpha)) \cdot \hat{\mathbf{a}}_{\mathcal{I}} \right|
\]
and the integrated directional gradient is:
\[
a(\mathcal{I}) \approx \sum_{T \sim \mathcal{U}(\mathcal{P}(\mathcal{I}))} \int_0^1 \left| \nabla f(\mathbf{x}(\alpha)) \cdot \hat{\mathbf{a}}_T \right| \, d\alpha
\]

Summing over all sets $\mathcal{I} \in \mathcal{S}$ yields (“integral of sums” from “sum of integrals”):
\[
\sum_{\mathcal{I} \in \mathcal{S}} a(\mathcal{I}) \approx \int_0^1 \sum_{\mathcal{I} \in \mathcal{S}} \sum_{T \sim \mathcal{U}(\mathcal{P}(\mathcal{I}))} \left| \nabla f(\mathbf{x}(\alpha)) \cdot \hat{\mathbf{a}}_T \right| \, d\alpha
\]

The total directional gradient, from Integrated Gradients, is given by:
\[
\int_0^1 \nabla f(\mathbf{x}(\alpha)) \cdot \frac{d\mathbf{x}(\alpha)}{d\alpha} \, d\alpha = f(\mathbf{x}) - f(\mathbf{x}')
\]

Our method calculates the total attribution by summing the scores $a(I)$ for a detected subset of interaction sets $S$. However, we calculate a sum of attributions for only a select group of interactions. Hence, $a(I)$ under-approximates the total directional gradient, unless the set of direction vectors $\hat{\mathbf{a}}_T$ spans the direction of $(\mathbf{x} - \mathbf{x}')$. Since $\mathcal{S}$ only includes a subset of all possible interaction sets, we obtain:
\[
\sum_{\mathcal{I} \in \mathcal{S}} a(\mathcal{I}) \le f(\mathbf{x}) - f(\mathbf{x}')
\]
Equality holds when the sampled subset directions fully represent the input difference vector.
\end{proof}

\begin{lemma}
    If an input $\mathbf{x}$ and a baseline $\mathbf{x'}$ are equal everywhere except $\mathbf{x}_\mathcal{I} \neq \mathbf{x'}_\mathcal{I}$ and if $f(\mathbf{x}) \neq f(\mathbf{x'})$, then $a(\mathcal{I}) \neq 0$. In other words, $a(\mathcal{I})$ satisfies Axiom 6.
\end{lemma}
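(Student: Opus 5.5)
The plan is to reduce the claim to the fundamental theorem of calculus along the straight-line path $\mathbf{x}(\alpha)=\mathbf{x}'+\alpha(\mathbf{x}-\mathbf{x}')$, as in the Approximate Completeness lemma, and then use that every summand of $a(\mathcal{I})$ is non-negative. Write
\[
a(\mathcal{I})=\sum_{T} \int_0^1 \bigl|\nabla f(\mathbf{x}(\alpha))\cdot\hat{\mathbf{a}}_T\bigr|\,d\alpha .
\]
Here $T$ ranges over the subsets of $\mathcal{I}$ in the expectation of Equation~\eqref{val}, and the empty subset contributes $0$ by the convention of Axiom 2. Each term is an integral of an absolute value, so it is $\geq 0$. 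It therefore suffices to exhibit a single subset $T$ whose term is strictly positive.

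The natural choice is $T=\mathcal{I}$. By hypothesis, $\mathbf{x}$ and $\mathbf{x}'$ agree outside $\mathcal{I}$. So the direction vector $\mathbf{a}_{\mathcal{I}}$ of Equation~\eqref{vec} coincides with $\mathbf{x}-\mathbf{x}'$, and $\hat{\mathbf{a}}_{\mathcal{I}}=(\mathbf{x}-\mathbf{x}')/\|\mathbf{x}-\mathbf{x}'\|$, which is well defined because $\mathbf{x}_{\mathcal{I}}\neq\mathbf{x}'_{\mathcal{I}}$. Since $\frac{d}{d\alpha}\mathbf{x}(\alpha)=\mathbf{x}-\mathbf{x}'$, the chain rule applies; $f$ is differentiable after the smoothing of ReLU by $h$. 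This gives
\[
\int_0^1 \nabla f(\mathbf{x}(\alpha))\cdot\hat{\mathbf{a}}_{\mathcal{I}}\,d\alpha=\frac{f(\mathbf{x})-f(\mathbf{x}')}{\|\mathbf{x}-\mathbf{x}'\|}\neq 0 .
\]
Combining this with $\int_0^1|g|\geq\left|\int_0^1 g\right|$ shows that the $T=\mathcal{I}$ term is strictly positive. Hence $a(\mathcal{I})>0$, and in particular $a(\mathcal{I})\neq 0$.

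The main obstacle is that $a(\mathcal{I})$ is defined only through a Monte Carlo sample of subsets, which need not contain $T=\mathcal{I}$. I would first state and prove the result for the exact (population) quantity, i.e.\ the expectation over $\mathcal{U}(\mathcal{P}(\mathcal{I}))$. There $\mathcal{I}$ carries positive weight $2^{-|\mathcal{I}|}$, so the argument above applies directly.

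For the sampled estimator, I would try to show that a positive term arises from any nonempty $T\subseteq\mathcal{I}$. Decompose $\mathbf{x}-\mathbf{x}'=\sum_{T}c_T\,\mathbf{a}_T$ over the sampled disjoint pieces, or simply note that the directions $\hat{\mathbf{a}}_T$ restricted to $\mathcal{I}$ cannot all be orthogonal to $\nabla f$ on average along the path. Summing those integrals would otherwise force $f(\mathbf{x})-f(\mathbf{x}')=0$, which holds exactly whenever the sampled $T$'s cover the support of $\mathbf{x}-\mathbf{x}'$, as in the equality case of Approximate Completeness. If this fails in general, I would state the sampled version as holding with probability $\geq 1-(1-2^{-|\mathcal{I}|})^t$, or under the covering condition.

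The Riemann-sum discretization with $m$ steps would be handled by a limiting remark: the discrete sum converges to the integral as $m\to\infty$, so strict positivity persists for $m$ sufficiently large.
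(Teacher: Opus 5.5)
Your core argument is correct and follows the same route as the paper. You integrate along $\mathbf{x}(\alpha)=\mathbf{x}'+\alpha(\mathbf{x}-\mathbf{x}')$ and use that, since $\mathbf{x}$ and $\mathbf{x}'$ differ only on $\mathcal{I}$, $\hat{\mathbf{a}}_{\mathcal{I}}$ is parallel to $\mathbf{x}-\mathbf{x}'$, so the nonzero change $f(\mathbf{x})-f(\mathbf{x}')$ must appear in the $\mathcal{I}$-directional integral. The paper only asserts this (``this integral must capture a non-zero contribution'') and silently identifies $a(\mathcal{I})$ with the single $T=\mathcal{I}$ term. Your chain-rule computation, the bound $\int_0^1|g|\,d\alpha\ge|\int_0^1 g\,d\alpha|$ to get past the absolute value, and the non-negativity of the other subset terms make it rigorous for the exact (expected) value of $a(\mathcal{I})$.

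The part that fails is your treatment of the sampled estimator. It is false that every nonempty $T\subseteq\mathcal{I}$ contributes a positive term, and covering the support is not enough. Take $\mathcal{I}=\{1,2,3\}$ with $\mathbf{x}-\mathbf{x}'=(1,1,1)$ on $\mathcal{I}$, sampled sets $T_1=\{1,2\}$ and $T_2=\{2,3\}$, and $f(\mathbf{y})=y_1-y_2+y_3$. Then $\nabla f\cdot\mathbf{a}_{T_1}=\nabla f\cdot\mathbf{a}_{T_2}=0$, so the sampled $a(\mathcal{I})=0$, yet $f(\mathbf{x})-f(\mathbf{x}')=1$.

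The right condition is $\mathbf{x}-\mathbf{x}'\in\mathrm{span}\{\mathbf{a}_{T_1},\dots,\mathbf{a}_{T_t}\}$, for instance when the sample contains $\mathcal{I}$ or a partition of the support. It is sufficient: if all sampled terms vanish, then $\nabla f(\mathbf{x}(\alpha))\cdot\mathbf{a}_{T_k}=0$ for a.e.\ $\alpha$ and all $k$, hence $\nabla f(\mathbf{x}(\alpha))\cdot(\mathbf{x}-\mathbf{x}')=0$ a.e.\ and $f(\mathbf{x})=f(\mathbf{x}')$. It is also necessary: if $\mathbf{u}\neq 0$ is the component of $\mathbf{x}-\mathbf{x}'$ orthogonal to that span, the linear $f(\mathbf{y})=\mathbf{u}\cdot\mathbf{y}$ violates the axiom. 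This span condition is what the paper's Approximate Completeness proof actually invokes; the ``covers the support'' wording you borrowed from its statement is too weak.

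Your bound $1-(1-2^{-|\mathcal{I}|})^t$ is valid but essentially zero at the paper's scale ($|\mathcal{I}|\approx 2000$, $t\le 100$). So state the lemma for the exact quantity, or for the estimator only under the span condition, not for the Monte Carlo sum in general.
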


\begin{proof}

We have two inputs, \( \mathbf{x} \) and \( \mathbf{x'} \), such that \( \mathbf{x} \) and \( \mathbf{x'} \) differ only on the subset of features \( \mathcal{I} \): $\mathbf{x}_\mathcal{I} \neq \mathbf{x'}_\mathcal{I}$, while for any \( j \notin \mathcal{I} \), \( \mathbf{x}_j = \mathbf{x'}_j \). Additionally, it is given that the model’s outputs at these points are different: $f(\mathbf{x}) \neq f(\mathbf{x'})$. 

The attribution \( a(\mathcal{I}) \) for a set of features \( \mathcal{I} \) is defined by the Integrated Directional Gradients (IDG) approach:
     \[
     a(\mathcal{I}) = \int_{\alpha=0}^1 \nabla_\mathcal{I} f(\mathbf{x}' + \alpha(\mathbf{x} - \mathbf{x'})) \, d\alpha
     \]

This integral measures the cumulative effect of changing the features in \( \mathcal{I} \) from \( \mathbf{x'} \) to \( \mathbf{x} \) on the model’s output \( f \), along a linear path parameterized by \( \alpha \) from 0 to 1. Since \( f(\mathbf{x}) \neq f(\mathbf{x'}) \), we know that there is a non-zero change in the model’s output as we move along the path from \( \mathbf{x'} \) to \( \mathbf{x} \). Because \( \mathbf{x} \) and \( \mathbf{x'} \) differ only on the features in \( \mathcal{I} \), any change in \( f \) along this path is attributed solely to the variations in the features within \( \mathcal{I} \). Hence, this integral must capture a non-zero contribution from \( \mathcal{I} \), resulting in \( a(\mathcal{I}) \neq 0 \).
\end{proof}



\begin{lemma}
    Two neural networks $f(\cdot)$ and $f'(\cdot)$, with corresponding value functions $a'$ and $a''$, are functionally equivalent if $f'(\mathbf{x}) = f''(\mathbf{x})$ for all $\mathbf{x}$. Then, $a'(\mathcal{I}) = a''(\mathcal{I})$ for sets $\mathcal{I} \in \mathcal{S}$. $a(\mathcal{I})$ satisfies Axiom 7.
\end{lemma}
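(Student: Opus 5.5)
The plan is to show that every quantity entering $a(\mathcal{I})$ depends on the network only through the function it computes, i.e.\ through the map $\mathbf{x}\mapsto f(\mathbf{x})$ and its gradient field, and never through the parameterization. We treat $f'$ and $f''$ as two functionally equivalent networks, $f'(\mathbf{x})=f''(\mathbf{x})$ for all $\mathbf{x}\in\mathbb{R}^d$. We fix the same baseline $\mathbf{x}'$, the same interaction set $\mathcal{I}\in\mathcal{S}$, and the same Monte Carlo subsets $T\sim\mathcal{U}(\mathcal{P}(\mathcal{I}))$ for both networks, either by sharing the random seed or by working with the exact expectation over $\mathcal{P}(\mathcal{I})$.

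\textbf{Step 1: equal gradients.} Since $f'$ and $f''$ agree on all of $\mathbb{R}^d$, they agree on every open neighbourhood. Hence wherever they are differentiable their gradients coincide: $\nabla f'(\mathbf{z})=\nabla f''(\mathbf{z})$ for all $\mathbf{z}$. This is the same observation that gives implementation invariance for Integrated Gradients~\cite{ig}. The chain rule computed through different architectures still returns the derivative of the same function. The smoothed activation $h$ makes both networks differentiable everywhere, so no almost-everywhere caveat is needed.

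\textbf{Step 2: equal directional quantities.} The direction $\hat{\mathbf{a}}_T$ in \eqref{vec}--\eqref{dir} depends only on $\mathbf{x}$, $\mathbf{x}'$ and $T$, so it is identical for both networks. Therefore $\nabla_T f'(\mathbf{x}(\alpha))=\left|\nabla f'(\mathbf{x}(\alpha))\cdot\hat{\mathbf{a}}_T\right|=\nabla_T f''(\mathbf{x}(\alpha))$ for every $\alpha\in[0,1]$. Integrating via \eqref{idg}, or taking the Riemann sum over the same $m+1$ points, gives $\text{IDG-Vis}'(T)=\text{IDG-Vis}''(T)$. Summing over the shared subsets $T$ in \eqref{val} then yields $a'(\mathcal{I})=a''(\mathcal{I})$.

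\textbf{Main obstacle.} The delicate point is that $\mathcal{S}$ itself is produced by the detection stage. That stage uses Hessian entries $\partial^2 f_c/\partial x_i\partial x_j$ and IG seeds, so one must check that both networks yield the same $\mathcal{S}$. The statement sidesteps this by quantifying over a given $\mathcal{I}\in\mathcal{S}$. Still, I would note that the Hessian and the IG scores are likewise functions of $f$ alone, by the same neighbourhood argument applied to $\nabla f$. Consequently the thresholded candidate sets, the seeds and, for a fixed SAM segmentation of $\mathbf{x}$, the entire detected collection $\mathcal{S}$ coincide for the two networks. A second subtlety is sampling randomness: invariance holds for the exact expectation, or for coupled samples, but not for independently drawn Monte Carlo estimates. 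I would state this explicitly.
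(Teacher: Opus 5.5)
Your main argument is correct and is essentially the paper's. Functional equivalence gives equal gradients at every point of the common straight-line path, hence equal directional derivatives along the same $\hat{\mathbf{a}}_T$ and equal path integrals. You are more careful than the paper on two points that its proof assumes silently: $\hat{\mathbf{a}}_T$ depends only on $\mathbf{x}$, $\mathbf{x}'$ and $T$, and the Monte Carlo subsets must be shared (or replaced by the exact average over $\mathcal{P}(\mathcal{I})$).

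Two of your side remarks are incorrect as stated and should be removed.

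First, appealing to the smoothed activation to get everywhere-differentiability undercuts the hypothesis, because replacing ReLU by a smooth surrogate does not preserve functional equivalence. For example, $r$ and $r\circ r$ agree everywhere, but for softplus $s(s(z))=\log(2+e^z)\neq s(z)$, and likewise $h\circ h\neq h$. So once you smooth, the two networks you differentiate need not be functionally equivalent, and Step 1 says nothing about them. Moreover, the $h$ displayed in the paper has $h(0^-)=1\neq 0=h(0)$, so it is not even continuous.

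The clean formulation is to apply the axiom to whatever networks are actually differentiated, using only ``equal functions have equal gradients wherever these exist.'' For the original piecewise-linear networks this needs the almost-everywhere caveat you tried to avoid, a point the paper also ignores. Automatic differentiation's convention for $r'(0)$ can even make the computed gradients of two equivalent implementations differ at kinks.

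Second, for the same reason your claim that both networks produce the same collection $\mathcal{S}$ fails in general. The Hessians are computed on the smoothed networks; for the unsmoothed piecewise-linear ReLU networks they vanish almost everywhere. So functionally equivalent ReLU networks can give different thresholded candidate sets.

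Neither issue affects the lemma itself, which, as you say, concerns a fixed $\mathcal{I}$.
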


\begin{proof}
If two neural networks \( f' \) and \( f'' \) are functionally equivalent, meaning \( f'(\mathbf{x}) = f''(\mathbf{x}) \) for all inputs \( \mathbf{x} \), their attributions for any feature set \( \mathcal{I} \in \mathcal{S} \) will also be identical, provided they use the same path for gradient computation. IDG computes attribution by integrating over a defined path from a baseline \( \mathbf{x'} \) to the input \( \mathbf{x} \), so having both networks follow this same path ensures that their gradients are identical at each point along it. Consequently, the attributions \( a'(\mathcal{I}) \) and \( a''(\mathcal{I}) \) will match for all feature sets \( \mathcal{I} \in \mathcal{S} \), since the path-integrated contributions are equal. Thus, \( a'(\mathcal{I}) = a''(\mathcal{I}) \) for all \( \mathcal{I} \in \mathcal{S} \).
\end{proof}

\begin{lemma}
    Given two neural networks $f'(\cdot)$ and $f''(\cdot)$, and $f(\mathbf{x}) = c \cdot f'(\mathbf{x}) + d \cdot f''(\mathbf{x})$, then $a(\mathcal{I}) =c \cdot a'(\mathcal{I}) + d \cdot a''(\mathcal{I})$. $a(\mathcal{I})$ satisfies Axiom 8.
\end{lemma}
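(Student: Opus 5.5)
The plan is to push linearity through each layer of the construction of $a(\mathcal{I})$: the gradient, the directional projection, the path integral (or its Riemann sum), and the Monte Carlo sum over sampled subsets. The interaction set $\mathcal{I}$, the baseline $\mathbf{x}'$, and the sampled subsets $T \sim \mathcal{U}(\mathcal{P}(\mathcal{I}))$ are held fixed across $f$, $f'$ and $f''$; that is, all three attributions are evaluated with the same draws. The direction vectors $\hat{\mathbf{a}}_T$ of Eqs.~\eqref{vec}--\eqref{dir} depend only on $\mathbf{x}$, $\mathbf{x}'$ and $T$, not on the network. So they are literally the same vectors in all three computations.

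\textbf{Step 1: linearity of the gradient.} Since $f = c f' + d f''$ pointwise and differentiation is linear, we get $\nabla f(\mathbf{x}(\alpha)) = c\,\nabla f'(\mathbf{x}(\alpha)) + d\,\nabla f''(\mathbf{x}(\alpha))$ at every point $\mathbf{x}(\alpha) = \mathbf{x}' + \alpha(\mathbf{x}-\mathbf{x}')$ on the path.

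\textbf{Step 2: linearity of the directional projection.} Taking the dot product with the fixed vector $\hat{\mathbf{a}}_T$ preserves this, giving $\nabla f \cdot \hat{\mathbf{a}}_T = c\,(\nabla f' \cdot \hat{\mathbf{a}}_T) + d\,(\nabla f'' \cdot \hat{\mathbf{a}}_T)$.

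\textbf{Step 3: linearity of integration and summation.} Both the integral in Eq.~\eqref{idg} and its discretization $\frac{1}{m+1}\sum_{k=0}^m(\cdot)$ are linear functionals. Therefore $\text{IDG-Vis}_f(T) = c\,\text{IDG-Vis}_{f'}(T) + d\,\text{IDG-Vis}_{f''}(T)$. Summing over the common sampled subsets $T$ in Eq.~\eqref{val} then yields $a(\mathcal{I}) = c\,a'(\mathcal{I}) + d\,a''(\mathcal{I})$. This is the same argument as the linearity proof for Integrated Gradients, and it has the same shape as the Lemma on Axiom~7, which also relied only on the path and the gradients being shared.

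\textbf{Main obstacle: the absolute value.} The absolute value in $\nabla_{\mathcal{I}} f = |\nabla f \cdot \hat{\mathbf{a}}|$ is not linear, so Step~2 fails if it is read literally. I would resolve this by working with the directional gradient as it is actually computed in Algorithm~\ref{alg:dividends}, namely the signed quantity $\nabla f(\mathbf{x}' + \tfrac{k}{m}(\mathbf{x}-\mathbf{x}')) \cdot \hat{\mathbf{a}}$, accumulated without taking a modulus. I would state this explicitly: the absolute value serves only to place the resulting value in the positive TU-game used for Axioms~1--4, and it is not part of the additive computation. With that reading, every step above is exactly linear and the lemma follows. I would also add a remark that when the absolute value is retained, the same chain of equalities still goes through whenever $c, d$ and the signed directional derivatives make the absolute value act as the identity along the path.
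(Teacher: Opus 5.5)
Your route is the paper's route: linearity of $\nabla$ at each point of the path $\mathbf{x}(\alpha)$, then linearity of the integral. You are more careful than the source in two places. First, the paper's proof writes $a(\mathcal{I})$ as the single integral $\int_0^1 \nabla_{\mathcal{I}} f(\mathbf{x}(\alpha))\,d\alpha$ and drops the sum over sampled subsets $T$. Your points that the draws must be shared and that $\hat{\mathbf{a}}_T$ does not depend on the network are therefore genuine additions. Second, the paper asserts $\nabla_{\mathcal{I}} f = c\,\nabla_{\mathcal{I}} f' + d\,\nabla_{\mathcal{I}} f''$ outright and never addresses the absolute value in $\nabla_{\mathcal{I}} f = |\nabla f\cdot\hat{\mathbf{a}}|$. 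The obstacle you flag is real, and the paper does not resolve it.

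You should state plainly that your resolution proves a different statement, and that no argument can avoid this. With the absolute value the lemma is false. Take $f''=-f'$ and $c=d=1$: then $f\equiv 0$ and $a(\mathcal{I})=0$. But $c\,a'(\mathcal{I})+d\,a''(\mathcal{I})=2a'(\mathcal{I})$, which is strictly positive unless all sampled directional derivatives of $f'$ vanish along the path.

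More generally, any attribution that is linear in $f$ and non-negative for every $f$ is identically zero. Indeed $0=a_{0}=a_{f'}+a_{-f'}$, and both terms are $\ge 0$. So your sentence that the absolute value ``serves only'' Axioms 1--4 cannot describe a single function $a$. The correct phrasing is: Axiom 8 holds for the signed quantity accumulated in Algorithm~\ref{alg:dividends}, which is not non-negative. The absolute-valued $a$, the one Axioms 1--4 rely on and the one the paper's proofs of Axioms 5 and 9 use, violates Axiom 8.

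Your closing remark is a valid sufficient condition, but it is pointwise. For every sampled $T$ and every $\alpha$, the signed directional derivatives of $f$, $f'$ and $f''$ along $\hat{\mathbf{a}}_T$ must share a common sign. That fails generically, so the remark does not rescue the lemma as stated.
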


\begin{proof}
Attribution \( a(\mathcal{I}) \) for a feature set \( \mathcal{I} \) is computed using the IDG, which involves integrating the gradient along a path from a baseline \( \mathbf{x}' \) to the input \( \mathbf{x} \):
     \[
     a(\mathcal{I}) = \int_{\alpha=0}^1 \nabla_\mathcal{I} f(\mathbf{x}' + \alpha(\mathbf{x} - \mathbf{x'})) \, d\alpha.
     \]

Similarly, for \( f' \) and \( f'' \), we have:
     \[
     a'(\mathcal{I}) = \int_{\alpha=0}^1 \nabla_\mathcal{I} f'(\mathbf{x}' + \alpha(\mathbf{x} - \mathbf{x'})) \, d\alpha
     \]
     and
     \[
     a''(\mathcal{I}) = \int_{\alpha=0}^1 \nabla_\mathcal{I} f''(\mathbf{x}' + \alpha(\mathbf{x} - \mathbf{x'})) \, d\alpha.
     \]

Since \( f(\mathbf{x}) = c \cdot f'(\mathbf{x}) + d \cdot f''(\mathbf{x}) \), the gradient of \( f \) with respect to the features in \( \mathcal{I} \) is:
     \[
     \nabla_\mathcal{I} f(\mathbf{x}) = c \cdot \nabla_\mathcal{I} f'(\mathbf{x}) + d \cdot \nabla_\mathcal{I} f''(\mathbf{x}).
     \]

This linearity holds for any point along the path \( \mathbf{x}(\alpha) = \mathbf{x}' + \alpha(\mathbf{x} - \mathbf{x'}) \), so we have:
     \[
     \nabla_\mathcal{I} f(\mathbf{x}(\alpha)) = c \cdot \nabla_\mathcal{I} f'(\mathbf{x}(\alpha)) + d \cdot \nabla_\mathcal{I} f''(\mathbf{x}(\alpha)).
     \]

We can now substitute this expression into the integral that defines \( a(\mathcal{I}) \):
     \[
     a(\mathcal{I}) = \int_{\alpha=0}^1 \nabla_\mathcal{I} f(\mathbf{x}(\alpha)) \, d\alpha
     \]
     \[
     = \int_{\alpha=0}^1 \left( c \cdot \nabla_\mathcal{I} f'(\mathbf{x}(\alpha)) + d \cdot \nabla_\mathcal{I} f''(\mathbf{x}(\alpha)) \right) \, d\alpha.
     \]

By the linearity of integration, we can separate the terms inside the integral:
     \[
     a(\mathcal{I}) = c \cdot \int_{\alpha=0}^1 \nabla_\mathcal{I} f'(\mathbf{x}(\alpha)) \, d\alpha + d \cdot \int_{\alpha=0}^1 \nabla_\mathcal{I} f''(\mathbf{x}(\alpha)) \, d\alpha.
     \]

This simplifies to:
     \[
     a(\mathcal{I}) = c \cdot a'(\mathcal{I}) + d \cdot a''(\mathcal{I}).
     \]





\end{proof}

\begin{lemma}
    Given two symmetric feature interaction sets $\mathcal{I}_1$ and $\mathcal{I}_2$, the value function for neural network $f$ will be $a(\mathcal{I}_1) = a(\mathcal{I}_2)$. In other words, $a(\mathcal{I})$ satisfies Axiom 9.
\end{lemma}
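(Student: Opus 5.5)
The plan is to make ``symmetric'' precise and then push the symmetry through each stage of the definition of $a(\cdot)$. Following Axiom 9, call $\mathcal{I}_1$ and $\mathcal{I}_2$ symmetric if $|\mathcal{I}_1|=|\mathcal{I}_2|$ and there is a bijection $\pi:\mathcal{I}_1\to\mathcal{I}_2$. Extend $\pi$ to a permutation $P$ of all $d$ coordinates that swaps $\mathcal{I}_1$ and $\mathcal{I}_2$ and fixes everything else. The hypothesis then says $f(P\mathbf{x})=f(\mathbf{x})$ for all $\mathbf{x}$. As in the original Integrated Gradients symmetry argument, I will also assume that the input and baseline are symmetric, i.e.\ $P\mathbf{x}=\mathbf{x}$ and $P\mathbf{x}'=\mathbf{x}'$. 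In the default implementation $\mathbf{x}'=\mathbf{0}$, so the baseline condition is automatic. Without the condition on $\mathbf{x}$ the statement is false in general, so I will state this requirement explicitly.

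The first step is gradient equivariance. Differentiating $f(P\mathbf{y})=f(\mathbf{y})$ gives $P^\top\nabla f(P\mathbf{y})=\nabla f(\mathbf{y})$. Since $P$ is orthogonal, this is equivalent to $\nabla f(P\mathbf{y})=P\nabla f(\mathbf{y})$. Every point on the path $\mathbf{x}(\alpha)=\mathbf{x}'+\alpha(\mathbf{x}-\mathbf{x}')$ is fixed by $P$, so $\nabla f(\mathbf{x}(\alpha))=P\nabla f(\mathbf{x}(\alpha))$ for every $\alpha$.

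The second step handles the directions. For $T\subseteq\mathcal{I}_1$, the definition of the direction vector, Eq.~\eqref{vec}, gives $\mathbf{a}_{\pi(T)}=P\mathbf{a}_T$, because $P(\mathbf{x}-\mathbf{x}')=\mathbf{x}-\mathbf{x}'$. Permutations preserve norms, so $\hat{\mathbf{a}}_{\pi(T)}=P\hat{\mathbf{a}}_T$. Combining this with the first step,
\[
\nabla f(\mathbf{x}(\alpha))\cdot\hat{\mathbf{a}}_{\pi(T)}=P\nabla f(\mathbf{x}(\alpha))\cdot P\hat{\mathbf{a}}_T=\nabla f(\mathbf{x}(\alpha))\cdot\hat{\mathbf{a}}_T.
\]
Taking absolute values and integrating gives $\text{IDG-Vis}(\pi(T))=\text{IDG-Vis}(T)$ for every $T\subseteq\mathcal{I}_1$. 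The same holds for the Riemann-sum variant, since it evaluates at the same path points. The empty set needs a convention, $\hat{\mathbf{a}}=\mathbf{0}$ and value $0$, which is consistent with Axiom 2.

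The third step aggregates over subsets. The map $T\mapsto\pi(T)$ is a bijection $\mathcal{P}(\mathcal{I}_1)\to\mathcal{P}(\mathcal{I}_2)$ that pushes the uniform distribution forward to the uniform distribution. The exact (expected) version of Eq.~\eqref{val} is therefore a reindexed sum of equal terms, which gives $a(\mathcal{I}_1)=a(\mathcal{I}_2)$. I expect this to be the main obstacle. With $t$ Monte Carlo samples drawn independently for each set, equality holds only in distribution, or exactly under coupled sampling in which the samples for $\mathcal{I}_2$ are the images under $\pi$ of those for $\mathcal{I}_1$. I would state the lemma for the exact expectation, note that the estimator is symmetric in distribution, and recommend the coupling to obtain exact equality in practice.
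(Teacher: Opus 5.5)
Your proposal is correct and follows the same route as the paper's proof. Invariance of $f$ under the swap gives equal directional gradients for corresponding subsets $T_1\subseteq\mathcal{I}_1$ and $T_2=\pi(T_1)\subseteq\mathcal{I}_2$; one then sums over the sampled subsets and integrates along the path. Your version is more careful than the paper's, which asserts $\nabla_{T_1} f(\mathbf{x})=\nabla_{T_2} f(\mathbf{x})$ from invariance of $f$ alone and equates two independently sampled Monte Carlo sums. Your added hypotheses ($P\mathbf{x}=\mathbf{x}$, $P\mathbf{x}'=\mathbf{x}'$, and exact expectation or coupled sampling) are exactly what those steps require, and the first cannot be dropped: for example, $f(x_1,x_2)=x_1x_2$ with $\mathcal{I}_1=\{1\}$, $\mathcal{I}_2=\{2\}$ and $\mathbf{x}'=\mathbf{0}$ gives values proportional to $|x_2|$ and $|x_1|$ respectively.
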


\begin{proof}
  Since \( \mathcal{I}_1 \) and \( \mathcal{I}_2 \) are symmetric, swapping the features in \( \mathcal{I}_1 \) with those in \( \mathcal{I}_2 \) does not change the output of \( f \). This symmetry implies that the gradients with respect to the features in \( \mathcal{I}_1 \) and \( \mathcal{I}_2 \) are equal:
    \[
    \nabla_{\mathcal{I}_1} f(\mathbf{x}) = \nabla_{\mathcal{I}_2} f(\mathbf{x}).
    \]

 Furthermore, because there must be a subset \( T_1 \subseteq \mathcal{I}_1 \) that has a corresponding symmetric subset \( T_2 \subseteq \mathcal{I}_2 \) (with \( |T_1| = |T_2| \)), then we also have:
    \[
    \nabla_{T_1} f(\mathbf{x}) = \nabla_{T_2} f(\mathbf{x}).
    \]

    Therefore, we can write the sum of the attributions as follows:
    \[
    \sum_{T_1 \sim \mathcal{U}(\mathcal{P}(\mathcal{I}_1))} |\nabla_{T_1} f(\mathbf{x})| = \sum_{T_2 \sim \mathcal{U}(\mathcal{P}(\mathcal{I}_2))} |\nabla_{T_2} f(\mathbf{x})|.
    \]

    Since the attribution method (e.g., Integrated Directional Gradients) involves integrating the gradients over a path from the baseline \( \mathbf{x'} \) to \( \mathbf{x} \), we can express the attribution for each interaction set as:

     \begin{equation*}
    \begin{split}
        \sum_{T_1 \sim \mathcal{U}(\mathcal{P}(\mathcal{I}_1))} \int_{\alpha = 0}^1 |\nabla_{T_1} &f(\mathbf{x}(\alpha)| =  \\ &\sum_{T_2 \sim \mathcal{U}(\mathcal{P}(\mathcal{I}_2))} \int_{\alpha = 0}^1 |\nabla_{T_2} f(\mathbf{x}(\alpha))|
    \end{split}
    \end{equation*}

Therefore, 

 \begin{equation*}
        a(\mathcal{I}_1) = a(\mathcal{I}_2)
    \end{equation*}
    
\end{proof}

\section{Smoothing RELU}\label{appendix:smoothingrelu}
Instead of using the SoftPlus $s(z)$ like previous works \cite{ih}, we use the following approximation of the ReLU $h(z)$ from Zhang \etal \cite{fire} as it provided less noisy explanations. \cref{fig:grad-hess} shows the plots for gradient and hessian for the functions with $\tau = 0.001$.

\begin{equation}
    h(z) =
    \begin{cases}
    \left( z + \sqrt{z^2 + \tau} \right)' = 1 + \frac{z}{\sqrt{z^2 + \tau}} & (z < 0) \\
    \left( \sqrt{z^2 + \tau} \right)' = \frac{z}{\sqrt{z^2 + \tau}} & (z \geq 0)
    \end{cases}
\end{equation}

\begin{figure}[h]
    \centering
    \includegraphics[width=0.60\linewidth]{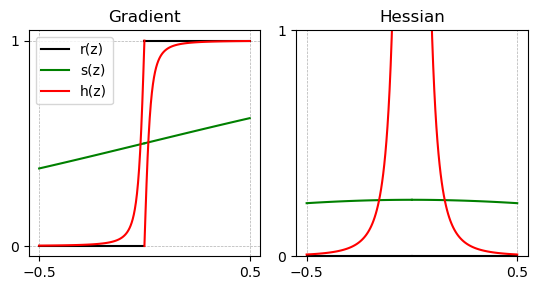}
    \caption{Gradient and Hessian comparison between the ReLU activation function $r(z)$, SoftPlus activation function $s(z)$, and Zhang \etal \cite{fire} $h(z)$ }
    \label{fig:grad-hess}
\end{figure}

\section{Higher-order feature interaction}\label{appendix:higherorder}

\begin{figure}[h]
    \centering
    \includegraphics[width=0.45\linewidth]{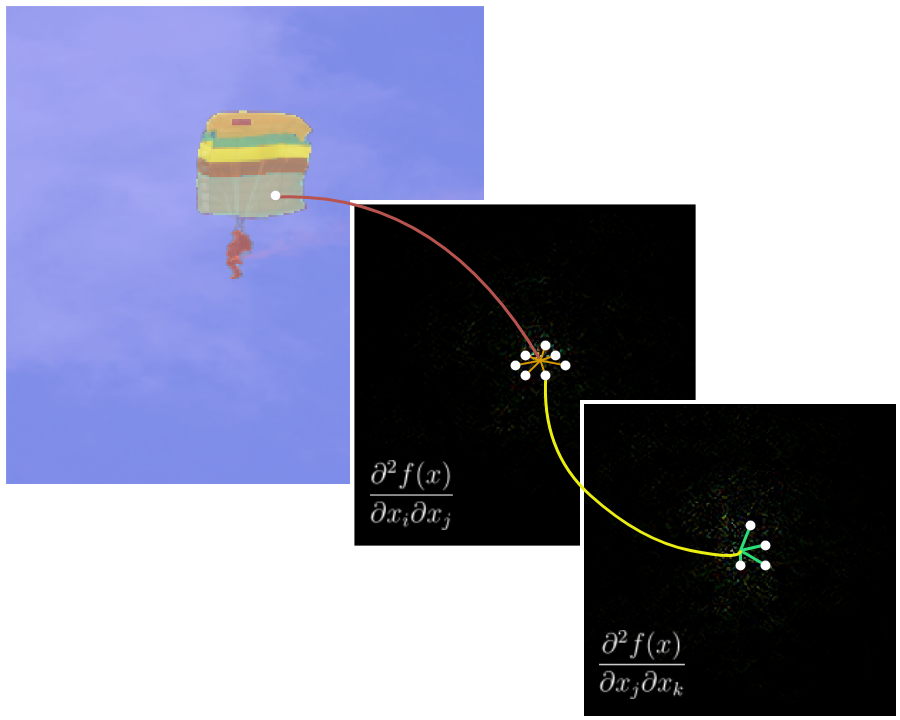}
    \caption{Diagram showing H-Sets algorithm. First, the image is segmented using SAM. Then, the highest attributed feature by Integrated Gradients in a mask is used as a starting feature to kickstart the algorithm. The Hessian matrix is then taken using this feature and the highest attributed features from the Hessian are appended to the feature interaction set. If more features can be added to the set, then the Hessian matrix of the highest attributed features from the previous Hessian is taken and the algorithm repeats recursively.}
    \label{fig:pairwise}
\end{figure}

As discussed in Section~\ref{section:higherorderfeature}, we propose to fix the starting feature $x_i$ as the highest attributed feature by Integrated Gradients~\cite{ig} in segmentation masks produced by Segment Anything Model (SAM) \cite{sam}. The overview of the H-Sets algorithm is shown in Figure~\ref{fig:pairwise}.

\section{Baselines}\label{appendix:baseline}

We evaluate our method against non-interaction-based attribution methods, Integrated Gradients~\cite{ig}, and interaction-based attribution methods: CAFO, CASO~\cite {caso}, Archipelago~\cite {arch} and MoXi~\cite{moxi}.

\subsection{Integrated Gradient}
Integrated Gradients (IG) assigns importance scores to each input feature by integrating the model’s gradients along a straight-line path from a baseline input $\mathbf{x}'$ (e.g., a black image or zero vector) to the actual input $\mathbf{x}$. This approach ensures that the attributions satisfy desirable properties such as sensitivity and implementation invariance.

Formally, given a model $f: \mathbb{R}^d \to \mathbb{R}$ and an input $\mathbf{x} \in \mathbb{R}^d$, the Integrated Gradient of the $i$-th input feature is defined as:

$$
\text{IG}_i(\mathbf{x}) = (x_i - x'_i) \int_{\alpha=0}^1 \frac{\partial f(\mathbf{x}' + \alpha (\mathbf{x} - \mathbf{x}'))}{\partial x_i} \, d\alpha
$$

where $\mathbf{x}'$ is the baseline input, and $\alpha \in [0,1]$ is the interpolation parameter. In practice, the integral is approximated using a Riemann sum over $m$ steps:

$$
\text{IG}_i(\mathbf{x}) \approx (x_i - x'_i) \cdot \frac{1}{m} \sum_{k=1}^m \frac{\partial f\left(\mathbf{x}' + \frac{k}{m}(\mathbf{x} - \mathbf{x}')\right)}{\partial x_i}
$$

Integrated Gradients (IG) is one of the most widely used feature attribution methods. We use the implementation of IG available in the Captum library~\cite{kokhlikyan2020captum}.

\subsection{Context-Aware First-Order and Second-Order Attributions}

To provide structured and faithful feature attributions, the Context-Aware First-Order (CAFO) and Context-Aware Second-Order (CASO) methods formulate feature importance as an optimization problem that explicitly incorporates group-feature perturbations under sparsity and smoothness constraints \cite{singla2019understanding}. Given a trained model \( f_{\theta^*} \), an input-label pair \( (x, y) \), and the loss function \( \ell(f_{\theta^*}(x), y) \), both CAFO and CASO aim to find a perturbation \( \Delta = \tilde{x} - x \) that maximizes the change in loss while penalizing large and non-sparse perturbations. This formulation captures the idea that important feature subsets are those whose perturbation causes significant changes in model behavior.

The \textbf{CAFO} method linearizes the loss function around the input and solves the following optimization:
\begin{equation}
\tilde{I}^{\text{CAFO}}_{\lambda_1, \lambda_2}(x, y) = \max_{\Delta} \nabla_x \ell(f_{\theta^*}(x), y)^\top \Delta - \lambda_1 \|\Delta\|_1 - \lambda_2 \|\Delta\|_2^2
\end{equation}
where \( \lambda_1 \) and \( \lambda_2 \) are regularization parameters controlling the sparsity and magnitude of the perturbation.

To better approximate the true model behavior, the \textbf{CASO} method extends CAFO by using a second-order Taylor expansion of the loss function:

\begin{equation}
\begin{split}
\tilde{I}^{\text{CASO}}_{\lambda_1, \lambda_2}(x, y) = \max_{\Delta} \nabla_x \ell(f_{\theta^*}(x), y)^\top \Delta + \\ \frac{1}{2} \Delta^\top H_x \Delta \\ - \lambda_1 \|\Delta\|_1 - \lambda_2 \|\Delta\|_2^2
\end{split}
\end{equation}
where \( H_x \) is the Hessian of the loss with respect to the input features. The inclusion of the curvature term \( \Delta^\top H_x \Delta \) enables CASO to capture model curvature.

We use the author's official implementation available in Github \cite{caso2021github}. 

\subsection{Archipelago}
{Archipelago} is a model-agnostic framework for detecting and attributing feature interactions \cite{arch}. It consists of two main components: {ArchAttribute}, an interaction-aware attribution method, and {ArchDetect}, a scalable interaction detection algorithm. Together, they generate axiomatic explanations by isolating groups of features—referred to as "feature islands"—whose combined contributions drive model predictions.

Given a black-box model \( f \), an input instance \( x^\star \), and a baseline \( x' \), ArchAttribute assigns an attribution score \( \phi(I) \) to each feature set \( I \subseteq [d] \), defined as:
\begin{equation}
\phi(I) = f(x^\star_I + x'_{\setminus I}) - f(x')
\end{equation}
This formulation captures the isolated contribution of the feature group \( I \) by embedding it in a neutral baseline context \( x' \). ArchAttribute is designed to be additive and satisfies several generalized axioms. To efficiently identify relevant interaction sets \( \mathcal{S} = \{I_1, I_2, \dots\} \), Archipelago also introduces {ArchDetect}. Please see the paper \cite{arch} for more details.

We use the official implementation of Archipelago provided by the authors on GitHub~\cite{archgithub}. For a fair comparison, we fix the number of feature interaction sets to 5 in Archipelago. We use the same number of interaction sets in our method.

\subsection{MoXi}

\textbf{MoXI} ({Model eXplanation by Interactions}) is a recent method that efficiently identifies and attributes groups of pixels using Shapley values and their interactions~\cite{moxi}. The framework provides two complementary approaches: \textit{pixel insertion}, which measures the confidence gain when a pixel is revealed, and \textit{pixel deletion}, which measures the confidence drop when a pixel is masked. In this paper, we employ the pixel insertion variant.

Because game-theoretic quantities such as Shapley values and interactions are computationally expensive, MoXI introduces a super-pixel representation and a greedy selection strategy. Let $k$ denote the number of pixels to be added to the interaction set $\mathcal{I}$, $N$ the total number of pixels, and $f$ the model. According to the pixel insertion algorithm, to add a pixel $b_k$ to $\mathcal{I}$, the method selects
\[
b_k \leftarrow \arg\max_{b \in N \setminus \mathcal{I}} f(\mathcal{I} \cup \{b\}),
\]
repeating this process $k$ times to construct the interaction set. 

We use the original implementation of MoXI provided in their official GitHub repository \cite{moxigithub}.

\section{Evaluation Metrics}\label{appendix:metrics}
Saliency maps only provide a qualitative evaluation of an explanation method. However, such evaluation does not provide an accurate comparison because of associated human bias and lack of ``ground-truth" data. In this work, we focus on two important evaluation metrics: sparsity and faithfulness.

\subsection{Sparsity}
Sparsity evaluates the comprehensibility of explanations. We evaluate the sparsity of the attribution vector \(\phi(\mathbf{x})\) by calculating its Gini index, as implemented by Chalasani \etal \cite{chalasani2020concise}. For an attribution vector \(\phi(\mathbf{x}) \in \mathbb{R}^d\), we first sort the absolute values in non-decreasing order and then compute the Gini index using Eqn. \ref{eqn:gini}.

\begin{equation}\label{eqn:gini}
    G(\phi(\mathbf{x})) = 1 - 2 \sum_{k=1}^d \frac{\phi(\mathbf{x})_{(k)}}{||\phi(\mathbf{x})||_1} \frac{d-k+0.5}{d}
\end{equation}

This formula calculates a weighted sum of fractions, where each fraction represents the contribution of the \(k\)-th largest element to the overall sparsity. Larger elements receive higher weights, emphasizing their impact on sparsity. The Gini index ranges from \([0,1]\): a value of 1 indicates perfect sparsity, where only one element in \(\phi(\mathbf{x})\) is greater than zero, while a value of 0 indicates no sparsity, meaning all vector elements are equal to some positive value.

\subsection{Faithfulness}

Faithfulness measures whether the explanation truly reflects the underlying model behavior. We use the ROAD evaluation by Rong \etal \cite{rong2022consistent} for faithfulness. ROAD evaluates model accuracy on a test set during an iterative process where the top \(k\) most important pixels are removed at each step. Pixel removal is performed using a noisy linear imputation technique to minimize the creation of out-of-distribution samples.

For our experiments, we use the MoRF (Most Relevant First) strategy from the ROAD implementation available in Quantus \cite{hedstrom2023quantus}. However, both the MoRF and LeRF (Least Significant Removal First) strategy provides the same results as discussed in the original paper. 

In MoRF, given a model \(F\) and an input, an attribution method assigns importance values to each feature. The features are then ranked in descending order of importance. At each step, the top \(k = 5\) most important features are removed, and model accuracy is assessed. A sharper accuracy drop indicates a more effective explanation.

We opted for ROAD over Insertion/Deletion \cite{petsiukrise} and ROAR \cite{hooker2019benchmark} because Insertion/Deletion introduces artifacts, leading to distribution shifts in perturbed inputs, and ROAR requires costly model retraining.

\paragraph{$\text{ROAD}_\text{AOPC}$ score.} In addition, we quantify the ROAD plot using area over the perturbation curve, computed as $\text{ROAD}_\text{AOPC} = \frac{1}{L + 1} \sum_{k=1}^{L} \left\langle f(x^{(0)}) - f(x^{(k)}) \right\rangle$ where, \( L \) represents the number of feature removal steps, and \( f(x) \) is the classifier's output for the originally predicted class given the input \( x \). The term \( x^{(0)} \) corresponds to the unperturbed input image, while \( x^{(k)} \) represents the image after \( k \) perturbation steps. Higher $\text{ROAD}_\text{AOPC}$ score represents a more faithful method.

\section{Additional experiments}\label{appendix:additionalexperiments}

\subsection{Non-interaction baselines}\label{appendix:morebaseline}
In addition to the baseline methods discussed in \cref{appendix:baseline}, we evaluate H-Sets against DeepLift~\cite{shrikumar2017learning} and LRP~\cite{lrp}. 

DeepLIFT~\cite{shrikumar2017learning} is explains a model's prediction by decomposing the ``difference-from-reference" of the output into contributions from each input feature. It operates by comparing the activations of the actual input $x$ against a user-defined ``reference" input $x_0$ (e.g., a neutral background or blurred image). By defining the difference $\Delta t = t - t_0$ for a target neuron $t$, the method assigns contribution scores $C_{\Delta x_i \Delta t}$ that satisfy the summation-to-delta property: $\sum C_{\Delta x_i \Delta t} = \Delta t$. 

Layer-wise Relevance Propagation (LRP)~\cite{lrp} redistributes the prediction score (relevance) from the output layer back to the input pixels, following the fundamental conservation principle: the total relevance $R$ must be preserved across each layer of the network ($\sum R_{i} = \sum R_{j}$). In this framework, each neuron distributes its relevance to its predecessors based on their relative contribution to its activation, using specific decomposition rules to handle different weight distributions. 

\cref{tab:sparsityadditional} and \cref{tab:faithfulnessadditional} demonstrate the sparsity and faithfulness scores. Compared with the main results in \cref{tab:sparsity,tab:faithfulnesseval}, these additional non-interaction baselines further reinforce the same overall pattern. In terms of sparsity, DeepLift and LRP are occasionally competitive with first-order baselines. However, this sparsity does not translate into consistently strong faithfulness. Across most architectures, their $\mathrm{ROAD}_{\mathrm{AOPC}}$ scores remain below those of \textbf{H-Sets}. In contrast, \textbf{H-Sets} maintains a stronger balance between sparsity and faithfulness across datasets and architectures.

\begin{table}[t]
\centering
\caption{
Sparsity comparison (Gini index; higher is better) of LRP and DeepLift. Values are mean $\pm$ std for 5 runs.}
\label{tab:sparsityadditional}
\resizebox{0.48\textwidth}{!}{
\begin{tabular}{l l c c}
\toprule
\textbf{Dataset} & \textbf{Model} & \textbf{LRP} & \textbf{DeepLift} \\
\midrule

\multirow{4}{*}{\textbf{ImageNet}} 
& VGG       & 0.681{\scriptsize$\pm$0.068} & 0.721{\scriptsize$\pm$0.073} \\
& ResNet    & 0.965{\scriptsize$\pm$0.022} & 0.601{\scriptsize$\pm$0.054} \\
& DenseNet  & 0.627{\scriptsize$\pm$0.078} & 0.611{\scriptsize$\pm$0.077} \\
& MobileNet & 0.584{\scriptsize$\pm$0.057} & 0.567{\scriptsize$\pm$0.053} \\
\midrule

\multirow{4}{*}{\textbf{CUB}} 
& VGG       & 0.752{\scriptsize$\pm$0.135} & 0.703{\scriptsize$\pm$0.082} \\
& ResNet    & 0.983{\scriptsize$\pm$0.022} & 0.586{\scriptsize$\pm$0.061} \\
& DenseNet  & 0.690{\scriptsize$\pm$0.170} & 0.556{\scriptsize$\pm$0.061} \\
& MobileNet & 0.600{\scriptsize$\pm$0.059} & 0.563{\scriptsize$\pm$0.062} \\
\bottomrule
\end{tabular}
}
\end{table}

\begin{table}[t]
\centering
\caption{
Faithfulness comparison ($\mathrm{ROAD}_{\mathrm{AOPC}}$; higher is better) of LRP and DeepLift. Values are mean $\pm$ std for 5 runs.}
\label{tab:faithfulnessadditional}
\resizebox{0.48\textwidth}{!}{
\begin{tabular}{l l c c}
\toprule
\textbf{Dataset} & \textbf{Model} & \textbf{LRP} & \textbf{DeepLift} \\
\midrule

\multirow{4}{*}{\textbf{ImageNet}} 
& VGG       & 0.357{\scriptsize$\pm$0.023} & 0.431{\scriptsize$\pm$0.073} \\
& ResNet    & 0.231{\scriptsize$\pm$0.042} & 0.224{\scriptsize$\pm$0.051} \\
& DenseNet  & 0.265{\scriptsize$\pm$0.043} & 0.425{\scriptsize$\pm$0.027} \\
& MobileNet & 0.271{\scriptsize$\pm$0.034} & 0.348{\scriptsize$\pm$0.056} \\
\midrule

\multirow{4}{*}{\textbf{CUB}} 
& VGG       & 0.565{\scriptsize$\pm$0.073} & 0.566{\scriptsize$\pm$0.085} \\
& ResNet    & 0.375{\scriptsize$\pm$0.021} & 0.437{\scriptsize$\pm$0.054} \\
& DenseNet  & 0.385{\scriptsize$\pm$0.035} & 0.584{\scriptsize$\pm$0.076} \\
& MobileNet & 0.529{\scriptsize$\pm$0.094} & 0.448{\scriptsize$\pm$0.073} \\
\bottomrule
\end{tabular}
}
\end{table}




\subsection{Additional faithfulness metric}\label{appendix:moremetrics}
In addition to ROAD~\cite{rong2022consistent}, we evaluate the faithfulness using faithfulness correlation~\cite{bhatt2020evaluating}. It operates by iteratively selecting a random subset of features ($|S|$), replacing them with baseline values, and then calculating the Pearson’s correlation coefficient between the average attribution of those features and the resulting change in the model's predicted logits. By averaging these correlations over multiple runs and test samples, the metric produces a correlation score, where higher values indicate a more ``faithful" explanation. 

\cref{tab:faithfulnesscorrelation} demonstrates that H-Sets consistently outperforms all other methods across datasets and networks on this metric. 

\begin{table}[t]
\centering
\caption{
Faithfulness Correlation (higher is better) of our method (H-Sets) versus Integrated Gradients (IG)~\cite{ig}, context-aware first-order (CAFO), and second-order explanations (CASO)~\cite{caso}, Archipelago (Arch)~\cite{arch}, and MOXI~\cite{moxi}. Values are mean $\pm$ std for 5 runs.}
\label{tab:faithfulnesscorrelation}
\resizebox{0.48\textwidth}{!}{
\begin{tabular}{l l c c c c c c}
\toprule
\multicolumn{2}{c}{} &
\multicolumn{6}{c}{\textbf{Explanation Methods}} \\
\cmidrule(l){3-8}
\textbf{Dataset} & \textbf{Model} & IG & Arch & CAFO & CASO & MoXI & \textbf{H-Sets} \\
\midrule

\multirow{4}{*}{\textbf{ImageNet}} 
& VGG        & -0.017{\scriptsize$\pm$0.106} & 0.009{\scriptsize$\pm$0.108} & -0.007{\scriptsize$\pm$0.102} & -0.001{\scriptsize$\pm$0.101} & 0.007{\scriptsize$\pm$0.119} & \textbf{0.009{\scriptsize$\pm$0.122}} \\
& ResNet     & -0.016{\scriptsize$\pm$0.099} & -0.044{\scriptsize$\pm$0.381} & 0.004{\scriptsize$\pm$0.109} & 0.004{\scriptsize$\pm$0.118} & -0.024{\scriptsize$\pm$0.349} & \textbf{0.251{\scriptsize$\pm$0.101}} \\
& DenseNet   & -0.005{\scriptsize$\pm$0.127} & \textbf{0.059{\scriptsize$\pm$0.147}} & -0.037{\scriptsize$\pm$0.121} & -0.032{\scriptsize$\pm$0.119} & 0.012{\scriptsize$\pm$0.126} & 0.034{\scriptsize$\pm$0.135} \\
& MobileNet  & -0.015{\scriptsize$\pm$0.117} & 0.020{\scriptsize$\pm$0.138} & -0.001{\scriptsize$\pm$0.105} & 0.000{\scriptsize$\pm$0.104} & 0.011{\scriptsize$\pm$0.129} & \textbf{0.021{\scriptsize$\pm$0.113}} \\
\midrule

\multirow{4}{*}{\textbf{CUB}} 
& VGG        & -0.015{\scriptsize$\pm$0.119} & 0.033{\scriptsize$\pm$0.317} & 0.009{\scriptsize$\pm$0.355} & 0.008{\scriptsize$\pm$0.111} & -0.009{\scriptsize$\pm$0.136} & \textbf{0.124{\scriptsize$\pm$0.157}} \\
& ResNet     & -0.008{\scriptsize$\pm$0.118} & 0.122{\scriptsize$\pm$0.319} & 0.016{\scriptsize$\pm$0.103} & 0.003{\scriptsize$\pm$0.111} & 0.037{\scriptsize$\pm$0.341} & \textbf{0.123{\scriptsize$\pm$0.426}} \\
& DenseNet   & -0.008{\scriptsize$\pm$0.114} & 0.111{\scriptsize$\pm$0.353} & \textbf{0.144{\scriptsize$\pm$0.340}} & -0.021{\scriptsize$\pm$0.108} & 0.025{\scriptsize$\pm$0.153} & 0.029{\scriptsize$\pm$0.126} \\
& MobileNet  & -0.004{\scriptsize$\pm$0.119} & 0.035{\scriptsize$\pm$0.303} & 0.092{\scriptsize$\pm$0.302} & -0.001{\scriptsize$\pm$0.104} & 0.013{\scriptsize$\pm$0.110} & \textbf{0.128{\scriptsize$\pm$0.119}} \\
\bottomrule
\end{tabular}
}
\end{table}

\section{ROAD plots}\label{appendix:ROADplots}
\subsection{ImageNet}
Figure~\ref{fig:road_imagenet} shows the ROAD curves for the ImageNet dataset. These plots illustrate how model accuracy degrades as top-attributed features are progressively removed, providing a direct evaluation of attribution faithfulness. 

\begin{figure}[!h]
    \centering
    \begin{subfigure}{0.4\linewidth}
        \centering
        \includegraphics[width=\linewidth]{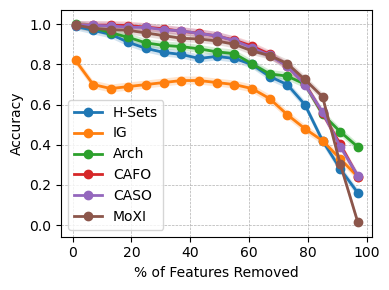}
        \caption{ResNet101}
        \label{fig:resnet101}
    \end{subfigure}
    \hfill
    \begin{subfigure}{0.4\linewidth}
        \centering
        \includegraphics[width=\linewidth]{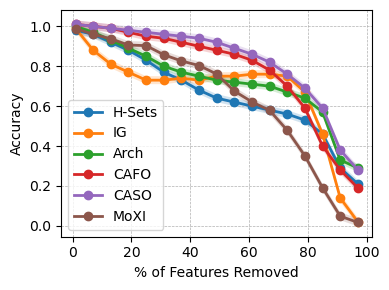}
        \caption{DenseNet121}
        \label{fig:densenet121}
    \end{subfigure}
    \begin{subfigure}{0.4\linewidth}
        \centering
        \includegraphics[width=\linewidth]{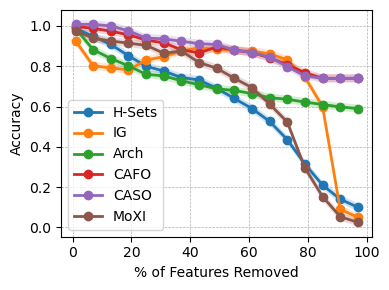}
        \caption{VGG16}
        \label{fig:vgg16}
    \end{subfigure}
    \hfill
    \begin{subfigure}{0.4\linewidth}
        \centering
        \includegraphics[width=\linewidth]{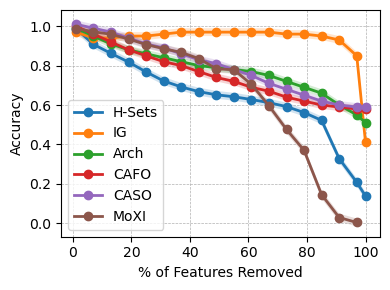}
        \caption{MobileNetV3}
        \label{fig:mobilenet}
    \end{subfigure}
    \caption{ROAD plots: ImageNet}
    \label{fig:road_imagenet}
\end{figure}

\subsection{CUB}
Figure~\ref{fig:roadCUBdataset} presents the \textbf{ROAD} curves for the CUB dataset~\cite{wah2011caltech}, showing how model accuracy degrades as increasingly important features are progressively removed. These plots visually corroborate the $\text{ROAD}_\text{AOPC}$ scores reported in Table~\ref{tab:faithfulnesseval}, where H-Sets achieves the highest scores across all models, reflecting the area over these perturbation curves. 

\begin{figure}[!h]
    \centering
    \begin{subfigure}[t]{0.4\linewidth}
        \centering
        \includegraphics[width=\linewidth]{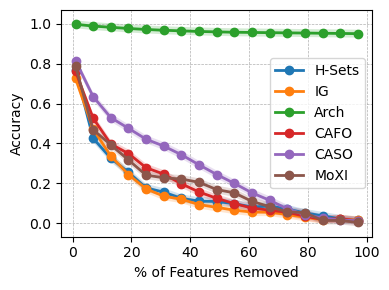}
                \caption{VGG16}
                 \label{fig:vggcub}
    \end{subfigure}
    \hfill
    \begin{subfigure}[t]{0.4\linewidth}
        \centering
        \includegraphics[width=\linewidth]{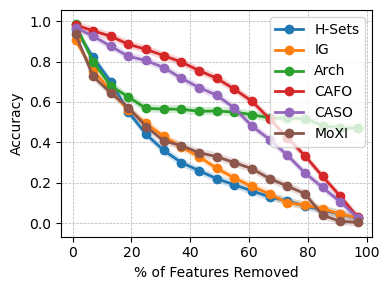}
                \caption{ResNet101}
        \label{fig:resnetcub}
    \end{subfigure}
        \hfill
    \begin{subfigure}[t]{0.4\linewidth}
        \centering
        \includegraphics[width=\linewidth]{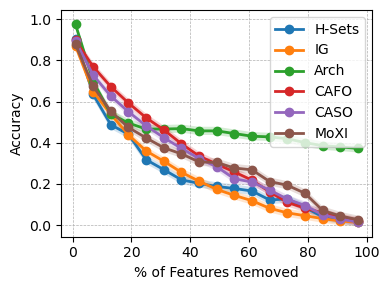}
                \caption{DenseNet121}
        \label{fig:densecub}
    \end{subfigure}
        \hfill
    \begin{subfigure}[t]{0.4\linewidth}
        \centering
        \includegraphics[width=\linewidth]{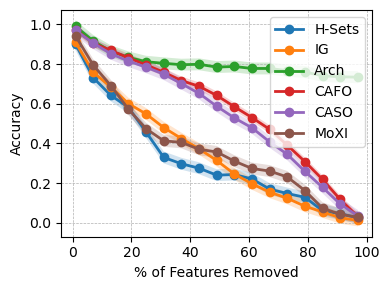}
                \caption{MobileNetV3}
                \label{fig:mobilenetcub}
    \end{subfigure}
        \caption{Faithfulness evaluation using ROAD plots for CUB across different models.}
         \label{fig:roadCUBdataset}
\end{figure}

\section{Ablation: Hyperparameter (CUB)}\label{appendix:ablationHyperparameterCUB}
We conduct an ablation study on the CUB dataset\cite{wah2011caltech} using the ResNet101~\cite{resnet} model to evaluate the robustness of H-Sets to two key hyperparameters: the number of features in the interaction set $\nu$ and the interaction threshold $\mu$ that determines the minimum pairwise Hessian strength.

\begin{table}[ht]
\centering
\caption{Ablation of H-Sets hyperparameters on CUB dataset using MobileNetV3. \textbf{Top}: varying the number of features included in explanations $\nu$. \textbf{Bottom}: varying the Hessian interaction threshold $\mu$.}
\label{tab:ablation-side-by-side}
\begin{minipage}{0.65\linewidth}
\centering
\resizebox{\textwidth}{!}{%
\begin{tabular}{@{}ccc@{}}
\toprule
\textbf{\# Features ($\nu$)} & \textbf{Sparsity} & \textbf{$\text{ROAD}_\text{AOPC}$} \\ \midrule
250 & {0.990} & 0.643 \\
1000 & 0.961 & 0.629 \\
2000 & 0.928 & {0.647} \\
3000 & 0.898 & 0.611 \\
5000 & 0.845 & 0.598 \\
\bottomrule
\end{tabular}%
}
\end{minipage}
\hspace{2em}
\begin{minipage}{0.65\linewidth}
\centering
\resizebox{\textwidth}{!}{%
\begin{tabular}{@{}ccc@{}}
\toprule
\textbf{Threshold ($\mu$)} & \textbf{Sparsity} & \textbf{$\text{ROAD}_\text{AOPC}$} \\ \midrule
0.1 & 0.903 & 0.614 \\
0.2 & 0.903 & 0.596 \\
0.3 & 0.901 & 0.609 \\
0.4 & 0.901 & 0.609 \\
0.5 & 0.898 & 0.610 \\
0.6 & 0.903 & 0.599 \\
0.7 & 0.900 & {0.615} \\
0.8 & {0.971} & 0.607 \\
\bottomrule
\end{tabular}%
}
\end{minipage}
\end{table}

\textbf{Number of features.} Table~\ref{tab:ablation-side-by-side} (top) shows that using very few features (e.g., 250) yields highly sparse maps (high sparsity score), but these can underrepresent the model's behavior, especially on large images like CUB, where objects typically span large spatial regions. As we increase the number of features, sparsity decreases steadily, which is expected as more interacting features are included. Despite this, the measure of faithfulness with $\text{ROAD}_\text{AOPC}$ score remains consistently high across the range, suggesting that our method reliably identifies relevant interactions. However, increasing the number of features raises computational cost, as our method performs Hessian-based interaction detection and set-level directional attributions. We find that selecting 2000–3000 features provides an effective trade-off: the explanations are still sparse and interpretable, ROAD scores remain high, and the computation remains tractable (see Figure~\ref{fig:numfeatures-CUB} for sample explanations).

\begin{figure}[ht]
    \centering
    \includegraphics[width=0.95\linewidth]{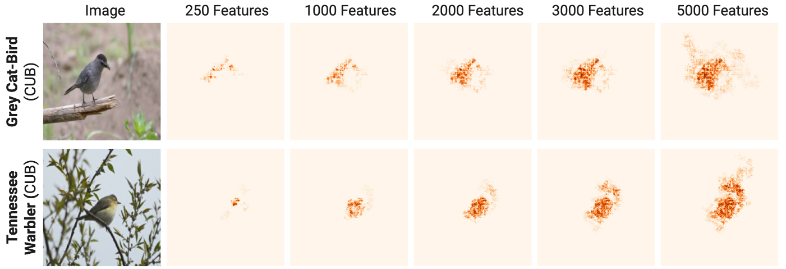}
    \caption{Saliency maps with different values of $\nu$ on CUB.}
    \label{fig:numfeatures-CUB}
\end{figure}

\textbf{Interaction threshold.} Table~\ref{tab:ablation-side-by-side} (bottom) shows the effect of varying the interaction threshold $\mu$, with the number of features fixed at 3000. We observe that both sparsity and $\text{ROAD}_\text{AOPC}$  scores remain stable across a wide range of $\mu$ values, indicating that H-Sets is robust to this hyperparameter. This robustness indicates that H-Sets reliably detects semantically meaningful interactions without being overly sensitive to the exact strength of second-order gradients. From a deployment perspective, this robustness simplifies hyperparameter tuning, making H-Sets practical to apply across datasets and architectures with minimal adjustment.

\end{document}